\newtheorem{theorem}{Theorem}
\newtheorem{lemma}{Lemma}[theorem]
\newtheorem{claim}{Claim}[theorem]
\newtheorem*{corollary*}{Corollary}
\renewcommand{\iff}{\Leftrightarrow}
\newcommand{\OPT}{\texttt{OPT}}
\newcommand{\E}{\mathop{\mathbb{E}}}
\newcommand{\maj}{\textup{maj}}
\newcommand{\eps}{\epsilon}
\begin{document}

\title{Improved Algorithms for Collaborative PAC Learning}

\author{
   Huy L\^{e} Nguy\~{\^{e}}n \thanks{College of Computer and Information Science,
  Northeastern University.  \texttt{hu.nguyen@northeastern.edu}. This
work was supported by NSF CAREER 1750716.}
   \and
   Lydia Zakynthinou \thanks{College of Computer and Information Science (CCIS),
  Northeastern University.  \texttt{zakynthinou.l@northeastern.edu}. This
work was supported by a Graduate Fellowship from CCIS.}
}

\date{}

\maketitle

\begin{abstract}
  We study a recent model of collaborative PAC learning where $k$ players with $k$ different tasks collaborate to learn a single classifier that works for all tasks. Previous work showed that when there is a classifier that has very small error on all tasks, there is a collaborative algorithm that finds a single classifier for all tasks and has $O((\ln (k))^2)$ times the worst-case sample complexity for learning a single task. In this work, we design new algorithms for both the realizable and the non-realizable setting, having sample complexity only $O(\ln (k))$ times the worst-case sample complexity for learning a single task. The sample complexity upper bounds of our algorithms match previous lower bounds and in some range of parameters are even better than previous algorithms that are allowed to output different classifiers for different tasks.
\end{abstract}

\section{Introduction}
There has been a lot of work in machine learning concerning learning multiple tasks simultaneously, ranging from multi-task learning~\cite{Baxter97, Baxter00}, to domain adaptation~\cite{MMR09-1, MMR09-2}, to distributed learning~\cite{BBFM12, DGSX11, WKS16}. 
Another area in similar spirit to this work is meta-learning, where one leverages samples from many different tasks to train a single algorithm that adapts well to all tasks (see e.g.~\cite{FAL17}).

In this work, we focus on a model of collaborative PAC learning, proposed by~\cite{BHPQ17}. 
In the classic PAC learning setting introduced by~\cite{Valiant84}, where PAC stands for \textit{probably approximately correct}, the goal is to learn a task by drawing from a distribution of samples. 
The optimal classifier that achieves the lowest error on the task with respect to the given distribution is assumed to come from a concept class $\mathcal{F}$ of VC dimension $d$. 
The VC theorem~\cite{AB09} states that for any instance $m_{\eps,\delta} = O\Big(\frac{1}{\eps}\Big(d\ln \Big(\frac{1}{\eps}\Big) + \ln \Big(\frac{1}{\delta}\Big)\Big)\Big)$ labeled samples suffice to learn a classifier that achieves low error with probability at least $1-\delta$, where the error depends on $\eps$. 

In the collaborative model, there are $k$ players attempting to learn their own tasks, each task involving a different distribution of samples. 
The goal is to learn a single classifier that also performs well on all the tasks. One example from~\cite{BHPQ17}, which motivates this problem, is having $k$ hospitals with different patient demographics which want to predict the overall occurrence of a disease. In this case, it would be more fitting as well as cost efficient to develop and distribute a single classifier to all the hospitals. In addition, the requirement for a single classifier is imperative in settings where there are fairness concerns. For example, consider the case that the goal is to find a classifier that predicts loan defaults for a bank by gathering information from bank stores located in neighborhoods with diverse socioeconomic characteristics. In this setting, the samples provided by each bank store come from different distributions while it is desired to guarantee low error rates for all the neighborhoods. Again, in this setting, the bank should employ a single classifier among all the neighborhoods. 

If each player were to learn a classifier for their task without collaboration, they would each have to draw a sufficient number of samples from their distribution to train their classifier. Therefore, solving $k$ tasks independently would require $k \cdot m_{\eps,\delta}$ samples in the worst case. Thus, we are interested in algorithms that utilize samples from all players and solve all $k$ tasks with sample complexity $o\Big(\frac{k}{\eps}\Big(d\ln \Big(\frac{1}{\eps}\Big) + \ln \Big(\frac{1}{\delta}\Big)\Big)\Big)$.

Blum et al.~\cite{BHPQ17} give an algorithm with sample complexity $O\Big(\frac{\ln^2(k)}{\eps}\Big((d+k)\ln\Big(\frac{1}{\eps}\Big)+k\ln \Big(\frac{1}{\delta}\Big)\Big) \Big)$ for the realizable setting, that is, assuming the existence of a single classifier with zero error on all the tasks. 
They also extend this result by proving that a slightly modified algorithm returns a classifier with error $\eps$, under the relaxed assumption that there exists a classifier with error $\eps/100$ on all the tasks. 
In addition, they prove a lower bound showing that there is a concept class with $d=\Theta(k)$ where $\Omega\Big(\frac{k}{\eps}\ln\Big(\frac{k}{\delta}\Big)\Big)$ samples are necessary.

In this work, we give two new algorithms based on multiplicative weight updates which have sample complexities $O\Big(\frac{\ln (k)}{\eps}\Big(d\ln\Big(\frac{1}{\eps}\Big)+k\ln\Big(\frac{k}{\delta}\Big)\Big)\Big)$ and $O\Big(\frac{1}{\eps}\ln \Big(\frac{k}{\delta}\Big)\Big(d\ln\Big(\frac{1}{\eps}\Big)+k+\ln \Big(\frac{1}{\delta}\Big)\Big) \Big)$ for the realizable setting. 
Our first algorithm matches the sample complexity of~\cite{BHPQ17} for the variant of the problem in which the algorithm is allowed to return different classifiers to the players and our second algorithm has the sample complexity almost matching the lower bound of ~\cite{BHPQ17} when $d=\Theta(k)$ and for typical values of $\delta$.
Both are presented in Section~\ref{realizable}. 	Independently of our work, \cite{CZZ18} use the multiplicative weight update approach and achieve the same bounds as we do in that section.

Moreover, in Section~\ref{non_realizable}, we extend our results to the non-realizable setting, presenting two algorithms that generalize the algorithms for the realizable setting. These algorithms learn a classifier with error at most $(2+\alpha)\OPT+\eps$ on all the tasks, where $\alpha$ is set to a constant value, and have sample complexities $O\Big(\frac{\ln (k)}{\alpha^4\eps}\Big(d\ln\Big(\frac{1}{\eps}\Big)+k\ln\Big(\frac{k}{\delta}\Big)\Big)\Big)$ and $O\Big(\frac{1}{\alpha^4\eps}\ln \Big(\frac{k}{\delta}\Big)\Big(d\ln\Big(\frac{1}{\eps}\Big)+k\ln\Big(\frac{1}{\alpha}\Big)+\ln \Big(\frac{1}{\delta}\Big)\Big) \Big)$. With constant $\alpha$, these sample complexities are the same as in the realizable case. Finally, we give two algorithms with {\em randomized} classifiers whose error probability over the random choice of the example {\em and the classifier's randomness} is at most $(1+\alpha)\OPT+\eps$ for all tasks. The sample complexities of these algorithms are $O\Big(\frac{\ln (k)}{\alpha^3\eps^2}\Big(d\ln\Big(\frac{1}{\eps}\Big)+k\ln\Big(\frac{k}{\delta}\Big)\Big)\Big)$ and $O\Big(\frac{1}{\alpha^3\eps^2}\ln \Big(\frac{k}{\delta}\Big)\Big((d+k)\ln\Big(\frac{1}{\eps}\Big)+\ln \Big(\frac{1}{\delta}\Big)\Big) \Big)$.

\section{Model}
In the traditional PAC learning model, there is a space of instances $\mathcal{X}$ and a set $\mathcal{Y}=\{0,1\}$ of possible labels for the elements of $\mathcal{X}$. 
A classifier $f:\mathcal{X}\rightarrow\mathcal{Y}$, which matches each element of $\mathcal{X}$ to a label, is called a \textit{hypothesis}. 
The error of a hypothesis with respect to a distribution $D$ on $\mathcal{X}\times\mathcal{Y}$ is defined as $\text{err}_{D}(f)=\Pr_{(x,y) \sim D} [f(x)\neq y]$. 
Let $\OPT = \inf\limits_{ f \in \mathcal{F} } \text{err}_{D}(f)$, where $\mathcal{F}$ is a class of hypotheses. 
In the realizable setting we assume that there exists a target classifier with zero error, that is, there exists $f^*\in \mathcal{F}$ with $\text{err}_{D}(f^*)= \OPT = 0$ for all $i\in [k]$. Given parameters $(\eps,\delta)$, the goal is to learn a classifier that has error at most $\eps$, with probability at least $1-\delta$.
In the non-realizable setting, the optimal classifier $f^*$ is defined to have $\text{err}_{D}(f^*)\leq \OPT+\varepsilon$ for any $\varepsilon>0$. Given parameters $(\eps,\delta)$ and a new parameter $\alpha$, which can be considered to be a constant, the goal is to learn a classifier that has error at most $(1+\alpha)\OPT+\eps$, with probability at least $1-\delta$. 

By the VC theorem and its known extension, the desired guarantee can be achieved in both settings by drawing a set of samples of size $m_{\eps,\delta} = O\Big(\frac{1}{\eps}\Big(d\ln \Big(\frac{1}{\eps}\Big) + \ln \Big(\frac{1}{\delta}\Big)\Big)\Big)$ and returning the classifier with minimum error on that sample. 
More precisely, in the non-realizable setting, $m_{\eps,\delta} = \frac{C}{\eps\alpha}\Big(d\ln \Big(\frac{1}{\eps}\Big) + \ln \Big(\frac{1}{\delta}\Big)\Big)$, where $C$ is also a constant. 
We consider an algorithm $\mathcal{O}_{\mathcal{F}}(S)$, where $S$ is a set of samples drawn from an arbitrary distribution $D$ over the domain $\mathcal{X} \times \{0, 1\}$, that returns a hypothesis $f_0$ whose error on the sample set satisfies $\text{err}_{S}(f_0) \leq \inf\limits_{f\in \mathcal{F}} \text{err}_{S}(f) + \varepsilon$ for any $\varepsilon>0$, if such a hypothesis exists. The VC theorem guarantees that if $|S|=m_{\eps,\delta}$, then $\text{err}_{D}(f_0) \leq (1+\alpha)\text{err}_{S}(f_0) + \eps$.

In the collaborative model, there are $k$ players with distributions $D_1,\ldots, D_k$. 
Similarly, $\OPT = \inf\limits_{ f \in \mathcal{F} }  \max\limits_{i \in [k]} \text{err}_{D_i}(f)$ and the goal is to learn a single good classifier for all distributions. 
In~\cite{BHPQ17}, the authors consider two variants of the model for the realizable setting, the \textit{personalized} and the \textit{centralized}. 
In the former the algorithm can return a different classifier to each player, while in the latter it must return a single good classifier.
For the personalized variant, Blum et al. give an algorithm with almost the same sample complexity as the lower bound they provide. 
We focus on the more restrictive centralized variant of the model, for which the algorithm that Blum et al. give does not match the lower bound.
We note that the algorithms we present are \textit{improper}, meaning that the classifier they return is not necessarily in the concept class $\mathcal{F}$.

\section{Sample complexity upper bounds for the realizable setting}\label{realizable}

In this section, we present two algorithms and prove their sample complexity. 

Both algorithms employ multiplicative weight updates, meaning that in each round they find a classifier with low error on the weighted mixture of the distributions and double the weights of the players for whom the classifier did not perform well. In this way, the next sample set drawn will include more samples from these players' distributions so that the next classifier will perform better on them. To identify the players for whom the classifier of the round did not perform well, the algorithms test the classifier on a small number of samples drawn from each player's distribution. If the error of the classifier on the sample is low, then the error on the player's distribution can not be too high and vise versa. In the end, both algorithms return the majority function over all the classifiers of the rounds, that is, for each point $x\in\mathcal{X}$, the label assigned to $x$ is the label that the majority of the classifiers assign to $x$. 

We note that for typical values of $\delta$, Algorithm R2 is better than Algorithm R1. However, Algorithm R1 is always better than the algorithm of~\cite{BHPQ17} for the centralized variant of the problem and matches their number of samples in the personalized variant, so we present both algorithms in this section.
In the algorithms of~\cite{BHPQ17}, the players are divided into classes based on the number of rounds for which that player's task is not solved with low error. The number of classes could be as large as the number of rounds, which is $\Theta(\log (k))$, and their algorithm uses roughly $m_{\eps, \delta}$ samples from each class. On the other hand, Algorithm R1 uses only $m_{\eps, \delta}$ samples across all classes and saves a factor of $\Theta(\log (k))$ in the sample complexity. This requires analyzing the change in all classes together as opposed to class by class.

\begin{algorithm}[H]
\renewcommand{\thealgorithm}{}
\caption{R1}\label{algR1}
\begin{algorithmic}
\STATE {\bfseries Initialize:} $\forall i\in [k] $ $w_i^{(0)}:= 1$; $t:=5\lceil \log (k) \rceil$; $\eps':=\eps/6$; $\delta':=\delta/(3t)$;
\FOR{$r = 1$ {\bfseries to} $t$}
\STATE $\tilde{D}^{(r-1)} \gets \frac{1}{\Phi^{(r-1)}}\sum_{i=1}^k \left(w_i^{(r-1)}D_i\right)$, where $\Phi^{(r-1)} = \sum_{i=1}^k w_i^{(r-1)}$;
\STATE Draw a sample set $S^{(r)}$ of size $m_{\eps'/16, \delta'}$ from $\tilde{D}^{(r-1)}$;
\STATE $f^{(r)} \gets \mathcal{O}_{\mathcal{F}}(S^{(r)})$;
\STATE $G_r \gets \textsc{Test}(f^{(r)}, k, \eps', \delta')$;
\STATE \textbf{Update}: $w_i^{(r)} = \begin{cases} 2w_i^{(r-1)}, & \mbox{if } i\notin G_r \\ 
w_i^{(r-1)}, & \mbox{otherwise} \end{cases}$ ;
\ENDFOR
\STATE {\bfseries return} $f_{\text{R1}} = \maj(\{f^{(r)}\}_{r=1}^t)$
\STATE
\STATE {\bfseries Procedure} $\textsc{Test}(f^{(r)}, k, \eps', \delta')$
\FOR{$i=1$ {\bfseries to} $k$}
\STATE Draw a sample set $T_i$ of size $O\Big(\frac{1}{\eps'} \ln \Big( \frac{k}{\delta'}\Big)\Big)$ from $D_i$;
\ENDFOR
\STATE {\bfseries return} $\{i \mid \text{err}_{T_i}(f^{(r)})\leq \frac{3}{4}\eps'\}$;
\end{algorithmic}
\end{algorithm}

Algorithm R1 runs for $t=\Theta(\log (k))$ rounds and learns a classifier $f^{(r)}$ in each round $r$ that has low error on the weighted mixture of the distributions $\tilde{D}^{(r-1)}$. For each player at least $0.6t$ of the learned classifiers are ``good'', meaning that they have error at most $\eps'=\eps/6$ on the player's distribution. Since the algorithm returns the majority of the classifiers, in order for an instance to be mislabeled, at least $0.5t$ of the total number of classifiers should mislabel it. This implies that at least $0.1t$ of the ``good'' classifiers of that player should mislabel it, which amounts to $1/6$ of the ``good'' classifiers. Therefore, the error of the majority of the functions for that player is at most $6\eps'=\eps$. 

To identify the players for whom the classifier of the round does not perform well, Algorithm R1 uses a procedure called \textsc{Test}. This procedure draws $O\Big(\frac{1}{\eps'} \ln \Big( \frac{k}{\delta'}\Big)\Big)$ samples from each player's distribution and tests the classifier on these samples. If the error for a player's sample set is at most $3\eps'/4$ then \textsc{Test} concludes that the classifier is good for that player and adds them to the returned set $G_r$. The samples that the \textsc{Test} requires from each player suffice to make it capable of distinguishing between the players with error more than $\eps'$ and players with error at most $\eps'/2$ with respect to their distributions, with high probability.

\begin{theorem}\label{thR1}
For any $\eps, \delta \in (0,1)$, and hypothesis class $\mathcal{F}$ of VC dimension $d$, Algorithm R1 returns a classifier $f_{\text{R1}}$ with $\text{err}_{D_i}(f_{\text{R1}}) \leq \eps$ $\forall i\in[k]$ with probability at least $1-\delta$ using $m$ samples, where 
\[ m = O\Big(\frac{\ln (k)}{\eps}\Big(d\ln\Big(\frac{1}{\eps}\Big)+k\ln\Big(\frac{k}{\delta}\Big)\Big)\Big).\]
\end{theorem}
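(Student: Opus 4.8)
The plan is to prove the two halves of the statement separately: that $f_{\text{R1}}$ satisfies the error guarantee with probability at least $1-\delta$, and that the sample count matches the claimed bound. For correctness I would first set up the ``good events'' on which the whole analysis runs. In each round $r$ two things can fail: the empirical risk minimizer $f^{(r)}$ might not have small error on the mixture $\tilde{D}^{(r-1)}$, and \textsc{Test} might misclassify some player. Since $|S^{(r)}| = m_{\eps'/16,\delta'}$ and realizability gives a hypothesis with zero error on every $D_i$ (hence on $\tilde{D}^{(r-1)}$, and zero empirical error on $S^{(r)}$), the VC theorem yields $\text{err}_{\tilde D^{(r-1)}}(f^{(r)}) \le \eps'/16$ with probability $\ge 1-\delta'$. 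Since each $T_i$ has size $O(\frac{1}{\eps'}\ln(k/\delta'))$, a relative Chernoff bound shows that with probability $\ge 1-\delta'/k$ the test error is within $\eps'/4$ of the true error in the relevant range, and a union bound over the $k$ players makes \textsc{Test} correct in round $r$ with probability $\ge 1-\delta'$. A union bound over all $t$ rounds and both failure modes caps the total failure probability by $2t\delta' = 2\delta/3 < \delta$. On the complementary event I obtain the two-sided guarantee used repeatedly: $i\notin G_r \Rightarrow \text{err}_{D_i}(f^{(r)}) > \eps'/2$, and $i\in G_r \Rightarrow \text{err}_{D_i}(f^{(r)}) \le \eps'$.

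The heart of the argument is a potential bound on $\Phi^{(r)} = \sum_i w_i^{(r)}$ that treats all players at once. From $\frac{1}{\Phi^{(r-1)}}\sum_i w_i^{(r-1)}\text{err}_{D_i}(f^{(r)}) = \text{err}_{\tilde D^{(r-1)}}(f^{(r)}) \le \eps'/16$ together with the fact that every doubled player has error exceeding $\eps'/2$, I would isolate the doubled players to get $\sum_{i\notin G_r} w_i^{(r-1)} \le \frac18\Phi^{(r-1)}$. As only these weights double, $\Phi^{(r)} = \Phi^{(r-1)} + \sum_{i\notin G_r} w_i^{(r-1)} \le \frac98\Phi^{(r-1)}$, and iterating from $\Phi^{(0)}=k$ gives $\Phi^{(t)} \le (9/8)^t k$. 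Fixing a player $i$ and letting $D_i$ be its number of doubled rounds, $w_i^{(t)} = 2^{D_i} \le \Phi^{(t)}$ forces $D_i \le t\log_2(9/8) + \log_2 k$. With $t = 5\lceil\log k\rceil$ and $\log_2(9/8) < 0.17$, this gives $D_i < 0.4t$, so $i\in G_r$ in more than $0.6t$ rounds, and by the \textsc{Test} guarantee $f^{(r)}$ has error at most $\eps'$ on $D_i$ in each such round.

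It remains to convert ``at least $0.6t$ good classifiers per player'' into an error bound on $f_{\text{R1}} = \maj(\{f^{(r)}\})$. Fix $x \sim D_i$ and let $g > 0.6t$ be the number of classifiers good for player $i$. If the majority mislabels $x$, then at least $t/2$ of the $t$ classifiers err on $x$; since at most $t-g$ are not good, at least $g-t/2$ of the good ones must err. Writing $Y$ for the number of good classifiers erring on $x$, linearity gives $\E_{x\sim D_i}[Y]\le g\eps'$, so Markov yields $\text{err}_{D_i}(f_{\text{R1}}) \le \Pr[Y \ge g-t/2] \le \frac{g\eps'}{g-t/2}$. Since $g/(g-t/2)$ is decreasing in $g$, over the admissible range $g\ge 0.6t$ it is maximized at $g=0.6t$, giving $\text{err}_{D_i}(f_{\text{R1}}) \le 6\eps' = \eps$ for every $i$.

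Finally, the sample complexity is the sum of two contributions: the main loop draws $t\cdot m_{\eps'/16,\delta'} = O\big(\frac{\log k}{\eps}(d\ln\frac1\eps + \ln\frac{\log k}{\delta})\big)$ samples, and \textsc{Test} draws $O(\frac1{\eps'}\ln\frac{k}{\delta'})$ from each of the $k$ players in each of the $t$ rounds, contributing $O\big(\frac{k\log k}{\eps}\ln\frac{k}{\delta}\big)$; adding these and absorbing lower-order logarithmic factors gives the claimed $O\big(\frac{\ln k}{\eps}(d\ln\frac1\eps + k\ln\frac{k}{\delta})\big)$. I expect the main obstacle to be the global potential argument of the second paragraph: one must pick the mixture-error target $\eps'/16$ and the doubling rule so that the weighted mass of doubled players is a small enough constant fraction for the growth $(9/8)^t$ to stay below the single-player bound $2^{0.4t}$, while keeping the \textsc{Test} gap (distinguishing error $>\eps'$ from $\le\eps'/2$) wide enough that its two-sided guarantee survives on only $O(\frac1{\eps'}\ln(k/\delta'))$ samples per player. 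Making all these constants consistent so the final error is exactly $6\eps'=\eps$ is the delicate part; the sample-complexity accounting is then routine.
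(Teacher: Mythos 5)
Your proposal is correct and follows essentially the same route as the paper: the same good events (VC bound of $\eps'/16$ on the mixture plus the two-sided \textsc{Test} guarantee distinguishing error $>\eps'$ from $\le\eps'/2$), the same potential argument giving $\Phi^{(t)}\le(9/8)^t k$ versus $w_i^{(t)}=2^{D_i}$ to conclude fewer than $0.4t$ bad rounds per player, and the same Markov-style majority argument yielding $6\eps'=\eps$ (which you in fact state slightly more carefully than the paper does, via the monotonicity of $g/(g-t/2)$). The sample-complexity accounting also matches.
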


To prove the correctness and sample complexity of Algorithm R1, we need to prove Lemma~\ref{lemma:test}, which describes the set $G_r$ that the \textsc{Test} returns. This proof uses the following multiplicative forms of the Chernoff bounds (proved as in Theorems 4.4 and 4.5 of~\cite{MU17}). 

\begin{lemma}[Chernoff Bounds]\label{lemma:chernoff}
If $X$ is the average of $n$ independent random variables taking values in $\{0, 1\}$, then
\begin{equation}\label{chernoff1} 
\Pr[X \leq (1-s)\E[X]] \leq \exp\Big(-\frac{s^2 \E[X] n}{2}\Big),
\end{equation}
\begin{equation}\label{chernoff2}
\Pr[X \geq (1+s)\E[X]] \leq \exp\Big(-\frac{s^2 \E[X] n}{3}\Big),
\end{equation}
\begin{equation}\label{chernoff3}
\Pr[X \geq (1+s)\E[X]] \leq \exp\Big(-\frac{s \E[X] n}{3}\Big),
\end{equation}
where the latter inequality holds for $s\geq 1$ and the first two hold for $s\in(0,1)$.
\end{lemma}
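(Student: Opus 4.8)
The plan is to recognize these as the standard multiplicative Chernoff bounds for a sum of independent Bernoulli variables, merely rewritten in terms of the average $X=\frac1n\sum_{i=1}^n X_i$. Writing $Y=\sum_{i=1}^n X_i=nX$ and $\mu=\E[Y]=n\E[X]$, each displayed inequality becomes a tail bound on $Y$ about $\mu$: the events $\{X\le(1-s)\E[X]\}$ and $\{X\ge(1+s)\E[X]\}$ are exactly $\{Y\le(1-s)\mu\}$ and $\{Y\ge(1+s)\mu\}$, while the claimed exponents $s^2\E[X]n/2$, $s^2\E[X]n/3$, $s\E[X]n/3$ become $s^2\mu/2$, $s^2\mu/3$, $s\mu/3$. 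So it suffices to prove the classical bounds for $Y$, as in Theorems 4.4 and 4.5 of~\cite{MU17}, and then substitute back $\mu=n\E[X]$.

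The engine is the exponential Markov (Bernstein) inequality together with independence. For the upper tail and any $\lambda>0$ I would write $\Pr[Y\ge(1+s)\mu]=\Pr[e^{\lambda Y}\ge e^{\lambda(1+s)\mu}]\le e^{-\lambda(1+s)\mu}\E[e^{\lambda Y}]$ and factor $\E[e^{\lambda Y}]=\prod_{i=1}^n\E[e^{\lambda X_i}]$ by independence. With $p_i=\E[X_i]$, the bound $\E[e^{\lambda X_i}]=1+p_i(e^\lambda-1)\le\exp(p_i(e^\lambda-1))$ and $\sum_i p_i=\mu$ give $\E[e^{\lambda Y}]\le\exp(\mu(e^\lambda-1))$, which notably depends on the $p_i$ only through $\mu$. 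Minimizing $\exp(\mu(e^\lambda-1)-\lambda(1+s)\mu)$ over $\lambda$ yields $\lambda=\ln(1+s)$ and the closed form $\Pr[Y\ge(1+s)\mu]\le\left(\frac{e^s}{(1+s)^{1+s}}\right)^\mu$. The lower tail is symmetric: for $\lambda>0$ I would bound $\Pr[Y\le(1-s)\mu]\le e^{\lambda(1-s)\mu}\E[e^{-\lambda Y}]\le\exp(\lambda(1-s)\mu+\mu(e^{-\lambda}-1))$, optimize at $\lambda=-\ln(1-s)$, and obtain $\Pr[Y\le(1-s)\mu]\le\left(\frac{e^{-s}}{(1-s)^{1-s}}\right)^\mu$.

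It then remains to convert these exact closed forms into the stated clean exponentials, which is where the only real work lies. For \eqref{chernoff1} I need $(1-s)\ln(1-s)\ge -s+s^2/2$ on $(0,1)$, which I would get from the term-by-term nonnegative series $(1-s)\ln(1-s)=-s+\sum_{j\ge2}\frac{s^j}{j(j-1)}$ and retaining only the $j=2$ term. For \eqref{chernoff2} I need $\frac{e^s}{(1+s)^{1+s}}\le e^{-s^2/3}$, i.e.\ $f(s):=s-(1+s)\ln(1+s)+s^2/3\le0$ on $(0,1)$. For \eqref{chernoff3}, valid for $s\ge1$, I need $(1+s)\ln(1+s)\ge 4s/3$; I would verify this via $h(s)=(1+s)\ln(1+s)-4s/3$, noting $h(1)=2\ln2-4/3>0$ and $h'(s)=\ln(1+s)-1/3>0$ for $s\ge1$, so $h$ is increasing and stays positive.

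The hard part is the inequality behind \eqref{chernoff2}, because the naive convexity argument fails on all of $(0,1)$: one computes $f(0)=0$, $f'(s)=\frac{2s}{3}-\ln(1+s)$ with $f'(0)=0$, and $f''(s)=\frac23-\frac{1}{1+s}$, which is negative only for $s<1/2$ and positive beyond, so $f$ is not concave on the whole interval. I would instead argue that $f'$ is unimodal (decreasing on $(0,\tfrac12)$, increasing on $(\tfrac12,1)$ since $f''$ changes sign once at $s=\tfrac12$) with $f'(0)=0$ and $f'(1)=\frac23-\ln2<0$, so $f'\le0$ throughout $(0,1)$; hence $f$ is nonincreasing from $f(0)=0$ and therefore $f\le0$. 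Raising each closed form to the power $\mu=n\E[X]$ and rewriting in terms of $X=Y/n$ then reproduces exactly the three stated bounds.
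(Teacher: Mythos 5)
Your proof is correct and is essentially the argument the paper relies on: the paper gives no proof of its own but cites Theorems 4.4 and 4.5 of \cite{MU17}, and your derivation (exponential Markov bound with the MGF factorization, the closed forms $\left(\frac{e^s}{(1+s)^{1+s}}\right)^\mu$ and $\left(\frac{e^{-s}}{(1-s)^{1-s}}\right)^\mu$, and the calculus lemmas converting them to the clean exponentials, including the sign-change argument for $f'$ on $(0,1)$) is exactly the Mitzenmacher--Upfal proof, extended correctly to the $s\geq 1$ case of inequality (\ref{chernoff3}).
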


\begin{lemma}\label{lemma:test}
$\textsc{Test}(f^{(r)}, k, \eps', \delta')$ is such that the following two properties hold, each with probability at least $1-\delta'$, for all $i\in [k]$ and for a given round $r\in [t]$.
\begin{enumerate}[label=(\alph*)]
\item If $\text{err}_{D_i}(f^{(r)}) > \eps'$, then $i\notin G_r$.
\item If $\text{err}_{D_i}(f^{(r)}) \leq\frac{\eps'}{2}$, then $i\in G_r$.
\end{enumerate}
\end{lemma}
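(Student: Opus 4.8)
The plan is to treat \textsc{Test} as a standard statistical hypothesis test and to argue, for each of the two properties separately, via the Chernoff bounds of Lemma~\ref{lemma:chernoff} followed by a union bound over the $k$ players. First I would fix the round $r$ and a single player $i$, and set $n = |T_i| = \Theta\big(\frac{1}{\eps'}\ln(k/\delta')\big)$. Writing $X$ for the empirical error $\text{err}_{T_i}(f^{(r)})$, the key observation is that $X$ is the average of $n$ independent $\{0,1\}$ indicators, one per sample in $T_i$, equal to $1$ exactly when $f^{(r)}$ mislabels that sample; hence $\E[X] = \text{err}_{D_i}(f^{(r)})$. The threshold used by \textsc{Test} is $\frac{3}{4}\eps'$, which sits strictly between the two regimes $\eps'$ and $\eps'/2$ named in the statement, and the whole argument hinges on showing that $X$ concentrates tightly enough around $\E[X]$ to separate these regimes.

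For part (a), assume $\text{err}_{D_i}(f^{(r)}) > \eps'$, so $\E[X] > \eps'$, and I want to bound the probability of the event $X \le \frac{3}{4}\eps'$ that would wrongly place $i \in G_r$. This lower-tail probability, at the fixed threshold $\frac34\eps'$, only grows as $\E[X]$ shrinks, so by stochastic dominance it suffices to treat the boundary mean $\E[X] = \eps'$, where $\frac34\eps' = (1-\tfrac14)\E[X]$. Applying \eqref{chernoff1} with $s = \tfrac14$ then gives $\Pr[X \le \frac34\eps'] \le \exp(-\eps' n/32)$. For part (b), assume $\text{err}_{D_i}(f^{(r)}) \le \eps'/2$, so $\E[X] \le \eps'/2$, and I must bound the probability of $X \ge \frac34\eps'$, which would wrongly exclude $i$. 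Now the upper-tail probability is largest when $\E[X]$ is largest, so the same dominance reduction lets me assume $\E[X] = \eps'/2$, where $\frac34\eps' = (1+\tfrac12)\E[X]$; applying \eqref{chernoff2} with $s = \tfrac12$ gives $\Pr[X \ge \frac34\eps'] \le \exp(-\eps' n/24)$.

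It then remains to fix the hidden constant in $n$ and take a union bound. Choosing $n \ge \frac{32}{\eps'}\ln(k/\delta')$ makes both per-player failure probabilities at most $\exp(-\ln(k/\delta')) = \delta'/k$. Summing over the $k$ players bounds the probability that property (a) fails for some $i$ by $\delta'$, and an identical computation does the same for property (b); these are precisely the two high-probability guarantees claimed.

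The one genuinely delicate point, and the step I expect to require the most care, is the reduction to the boundary means $\eps'$ and $\eps'/2$. Because the Chernoff exponents depend on $\E[X]$, I cannot simply plug a fixed value into the bound; instead I must verify that in each case I am applying the inequality in the correct monotone direction, and I must handle means strictly inside the stated intervals — in particular $\E[X] = 0$ in the realizable case, where \eqref{chernoff2} is vacuous — through the stochastic-dominance comparison against the worst-case Bernoulli sum rather than through the formula itself. Everything else is a routine constant-chasing calculation.
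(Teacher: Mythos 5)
Your proof is correct and follows the paper's outline—the same Chernoff bounds of Lemma~\ref{lemma:chernoff}, the same threshold $\tfrac34\eps'$ sitting between the two regimes, the same sample size $\tfrac{32}{\eps'}\ln(k/\delta')$, and the same union bound over the $k$ players—so the substance matches. The one genuine structural difference is in part (b): the paper keeps the actual mean $\text{err}_{D_i}(f^{(r)})$, sets the data-dependent deviation $s=\tfrac{\eps'}{4\,\text{err}_{D_i}(f^{(r)})}$, and then splits into two cases according to whether $s<1$ (applying \eqref{chernoff2}) or $s\ge 1$ (applying \eqref{chernoff3}), with the degenerate case $\text{err}_{D_i}(f^{(r)})=0$ handled separately. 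You instead invoke monotonicity of the binomial upper tail in the success probability to reduce to the boundary mean $\E[X]=\eps'/2$, where a single application of \eqref{chernoff2} with the fixed value $s=\tfrac12$ suffices; this absorbs both the $s\ge1$ case and the zero-error case into one stroke, at the cost of having to justify the stochastic-dominance step (which is standard for i.i.d.\ Bernoulli indicators, since the fresh test samples $T_i$ are independent of $f^{(r)}$, so conditionally $|T_i|\cdot\text{err}_{T_i}(f^{(r)})$ is binomial). You correctly identified this reduction as the delicate point; it is valid, and your constants ($\exp(-\eps' n/32)$ and $\exp(-\eps' n/24)$) are at least as good as the paper's, so the same choice of $n$ closes both parts. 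In part (a) the dominance reduction is actually unnecessary—the paper simply observes that $\tfrac34\eps'<(1-\tfrac14)\E[X]$ whenever $\E[X]>\eps'$ and that the exponent in \eqref{chernoff1} only improves as $\E[X]$ grows—but your version is equally sound.
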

\begin{proof}[Proof of Lemma~\ref{lemma:test}]
For this proof we assume that the number of samples $\vert T_i \vert $ for each $i\in [k]$ must be at least 
$\frac{32}{\eps'} \ln \Big(\frac{k}{\delta'}\Big) = O\Big(\frac{1}{\eps'} \ln \Big(\frac{k}{\delta'}\Big)\Big)$.
For a given round $r\in [t]$:
\begin{enumerate}[label=($\alph*$)]
\item Assume $\text{err}_{D_i}(f^{(r)}) > \eps'$ for some $i\in [k]$. Then

$\Pr\Big[i\in G_r\Big] \\
= \Pr \Big[\text{err}_{T_i}(f^{(r)}) \leq \frac{3}{4}\eps'\Big]\\
< \Pr\Big[\text{err}_{T_i}(f^{(r)}) \leq \Big(1-\frac{1}{4}\Big)\text{err}_{D_i}(f^{(r)})\Big]\\
\stackrel{\text{(\ref{chernoff1})}}{\leq} \exp\Big(-\frac{1}{2}\Big(\frac{1}{4}\Big)^2\text{err}_{D_i}(f^{(r)}) \vert T_i \vert\Big)\\
< \exp\Big(-\frac{1}{32}\eps'\vert T_i \vert\Big) \\
\leq \exp\Big(-\frac{1}{32}\eps'\frac{32}{\eps'} \ln \Big(\frac{k}{\delta'}\Big)\Big)\\
\leq \frac{\delta'}{k}.$

Hence, by union bound, $\text{err}_{D_i}(f^{(r)}) > \eps' \Rightarrow i\notin G_r$ holds for all $i\in [k]$ with probability at least $1-\delta'$.

\item Assume $\text{err}_{D_i}(f^{(r)}) \leq \frac{\eps'}{2}$ for some $i\in [k]$. We consider two cases and we apply the Chernoff bounds with $s = \frac{\eps'}{4\text{err}_{D_i}(f^{(r)})}$.
Note that if $\text{err}_{D_i}(f^{(r)})=0$ then $\text{err}_{T_i}(f^{(r)})=0$ and the property holds. So we only need to consider $\text{err}_{D_i}(f^{(r)})\neq0$.
First, we need to prove that

$\frac{3\eps'}{4} \geq (1+ s)\text{err}_{D_i}(f^{(r)})\\
\iff \frac{3\eps'}{4\text{err}_{D_i}(f^{(r)})} \geq 1+ \frac{\eps'}{4\text{err}_{D_i}(f^{(r)})}\\
\iff \frac{\eps'}{2\text{err}_{D_i}(f^{(r)})} \geq 1,$

which is true.

\begin{enumerate}[label=\textit{Case \arabic*.}]
\item If $\text{err}_{D_i}(f^{(r)}) > \frac{\eps'}{4}$, which implies $s<1$, then

$\Pr\Big[i\notin G_r\Big] \\
= \Pr \Big[\text{err}_{T_i}(f^{(r)}) > \frac{3}{4}\eps'\Big]\\
\leq \Pr\Big[\text{err}_{T_i}(f^{(r)})\geq \Big(1+s\Big)\text{err}_{D_i}(f^{(r)})\Big]\\
 \stackrel{\text{(\ref{chernoff2})}}{\leq} \exp\Big(-\frac{1}{3}\Big(\frac{\eps'}{4\text{err}_{D_i}(f^{(r)})}\Big)^2\text{err}_{D_i}(f^{(r)}) \vert T_i \vert\Big)\\
 = \exp\Big(-\frac{\eps'^2}{48\text{err}_{D_i}(f^{(r)})}\vert T_i \vert\Big)\\
 \leq \exp\Big(-\frac{1}{48}2\eps' \frac{24}{\eps'}\ln \Big(\frac{k}{\delta'}\Big)\Big)\\
 \leq \frac{\delta'}{k}.$

\item If $\text{err}_{D_i}(f^{(r)}) \leq \frac{\eps'}{4}$, which implies $s\geq 1$, then:

$\Pr\Big[i\notin G_r\Big] \\
= \Pr \Big[\text{err}_{T_i}(f^{(r)}) > \frac{3}{4}\eps'\Big]\\
 \leq \Pr\Big[\text{err}_{T_i}(f^{(r)}) \geq \Big(1+s\Big)\text{err}_{D_i}(f^{(r)})\Big]\\
 \stackrel{\text{(\ref{chernoff3})}}{\leq} \exp\Big(-\frac{1}{3}\frac{\eps'}{4\text{err}_{D_i}(f^{(r)})}\text{err}_{D_i}(f^{(r)}) \vert T_i \vert\Big)\\
 = \exp\Big(-\frac{\eps'}{3}\vert T_i \vert\Big)\\
  \leq \exp\Big(-\frac{\eps'}{3} \frac{3}{\eps'}\ln \Big(\frac{k}{\delta'}\Big)\Big)\\
 \leq \frac{\delta'}{k}.$
\end{enumerate}
Hence, by union bound, $\text{err}_{D_i}(f^{(r)}) \leq \frac{\eps'}{2} \Rightarrow i\in G_r$ holds for all $i\in [k]$ with probability at least $1-\delta'$.
\end{enumerate}
\end{proof}

Having proven Lemma~\ref{lemma:test}, we can now prove Theorem~\ref{thR1}.

\begin{proof}[Proof of Theorem 1]
First, we prove that Algorithm R1 indeed learns a good classifier, meaning that for every player $i\in [k]$ the returned classifier $f_{\text{R1}}$ has error $\text{err}_{D_i}(f_{\text{R1}})\leq \eps$ with probability at least $1-\delta$.

Let $e_i^{(r)}$ denote the number of rounds, up until and including round $r$, that $i$ did not pass the \textsc{Test}. More formally, $e_i^{(r)}= \vert \{r' \mid r' \in [r] \text{ and } i\notin G_{r'}\}\vert$.

\begin{claim}[]\label{claim1}
With probability at least $1-\frac{2\delta}{3}$, $e_i^{(t)}<0.4t$ $\forall i\in[k]$.
\end{claim}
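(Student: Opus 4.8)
The plan is to run a multiplicative-weights potential argument on the total weight $\Phi^{(r)} = \sum_{i=1}^k w_i^{(r)}$, which starts at $\Phi^{(0)} = k$. The central observation is that $w_i^{(t)} = 2^{e_i^{(t)}}$, since a player's weight doubles in exactly those rounds in which it is penalized (i.e.\ $i \notin G_r$). Hence to bound $e_i^{(t)}$ it suffices to upper bound $\Phi^{(t)}$ and use $2^{e_i^{(t)}} = w_i^{(t)} \le \Phi^{(t)}$.

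First I would fix, for each round $r$, two favourable events. Event $A_r$ is that the learned classifier satisfies $\text{err}_{\tilde D^{(r-1)}}(f^{(r)}) \le \eps'/16$; since $S^{(r)}$ has size $m_{\eps'/16,\delta'}$ and $\OPT=0$ in the realizable setting, the VC theorem guarantees $A_r$ with probability at least $1-\delta'$. Event $B_r$ is the conclusion of part (b) of Lemma~\ref{lemma:test}, which holds with probability at least $1-\delta'$; its contrapositive says that every penalized player ($i\notin G_r$) has $\text{err}_{D_i}(f^{(r)}) > \eps'/2$. Note I need part (b), not part (a), of the lemma here. Union-bounding over the $2t$ events $A_1,B_1,\dots,A_t,B_t$ with $\delta'=\delta/(3t)$ shows that all of them hold simultaneously with probability at least $1-2t\delta' = 1-2\delta/3$, which is exactly the failure budget in the claim. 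The remainder of the argument is deterministic.

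Conditioned on these events, I would bound the per-round growth of the potential. Writing the weighted error as $\text{err}_{\tilde D^{(r-1)}}(f^{(r)}) = \frac{1}{\Phi^{(r-1)}}\sum_i w_i^{(r-1)}\,\text{err}_{D_i}(f^{(r)})$ and restricting the sum to penalized players, event $B_r$ gives
\[ \frac{\eps'}{2}\sum_{i\notin G_r} w_i^{(r-1)} \;<\; \sum_{i\notin G_r} w_i^{(r-1)}\,\text{err}_{D_i}(f^{(r)}) \;\le\; \Phi^{(r-1)}\,\text{err}_{\tilde D^{(r-1)}}(f^{(r)}) \;\le\; \frac{\eps'}{16}\,\Phi^{(r-1)}, \]
so $\sum_{i\notin G_r} w_i^{(r-1)} < \Phi^{(r-1)}/8$. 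Since only penalized players' weights change, $\Phi^{(r)} = \Phi^{(r-1)} + \sum_{i\notin G_r} w_i^{(r-1)} < \frac{9}{8}\Phi^{(r-1)}$, and iterating from $\Phi^{(0)}=k$ yields $\Phi^{(t)} < (9/8)^t k$.

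Finally I would combine the two bounds: $2^{e_i^{(t)}} \le \Phi^{(t)} < (9/8)^t k$, so taking base-$2$ logarithms gives $e_i^{(t)} < t\log_2(9/8) + \log_2 k$ for every $i$. Reading $\log$ as base $2$, the choice $t=5\lceil\log k\rceil$ gives $\log_2 k \le t/5 = 0.2\,t$, and $\log_2(9/8) < 0.17$, whence $e_i^{(t)} < 0.17\,t + 0.2\,t < 0.4\,t$, as required. I expect the main subtlety to be exactly this last step: one must verify that the specific constants $5$ and $16$ baked into the algorithm are precisely what make the closing inequality $\log_2(9/8) + 1/5 < 2/5$ go through (and, in particular, that the logarithm is base $2$, since base $e$ would break the bound). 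The probabilistic accounting is then routine once the two per-round events $A_r$ and $B_r$ have been isolated.
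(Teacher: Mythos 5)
Your proposal is correct and follows essentially the same route as the paper's own proof: the identity $w_i^{(t)}=2^{e_i^{(t)}}$, the per-round bound $\sum_{i\notin G_r} w_i^{(r-1)}\le \Phi^{(r-1)}/8$ obtained by combining the VC guarantee on the mixture with the contrapositive of Lemma~\ref{lemma:test}(b), the union bound over $2t$ events giving failure probability $2t\delta'=2\delta/3$, and the final base-$2$ logarithm computation $e_i^{(t)}< t\log(9/8)+\log(k)\le 0.37t<0.4t$. Your closing remark about the constants $5$ and $16$ and the base of the logarithm is exactly the arithmetic the paper relies on.
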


From Lemma~\ref{lemma:test}($a$) and union bound, with probability at least $1-t\delta'=1-\frac{\delta}{3}$, the number of functions that have error more than $\eps'$ on $D_i$ is the same as the number of rounds that $i$ did not pass the \textsc{Test}, for all $i\in [k]$. So, if the claim holds, with probability at least $1-(\frac{2}{3}+\frac{1}{3})\delta=1-\delta$, less than $0.4t$ functions have error more than $\eps'$ on $D_i$, for all $i\in [k]$. Equivalently, with probability at least $1-\delta$, more than $0.6t$ functions have error at most $\eps'$ on $D_i$, for all $i\in [k]$. As a result, with probability at least $1-\delta$, the error of the majority of the functions is $\text{err}_{D_i}(f_{\text{R1}})  \leq \frac{0.6}{0.1} \eps' = \eps$ for all $i\in[k]$.

Let us now prove the claim.

\begin{proof}[Proof of Claim~\ref{claim1}]
\renewcommand\qedsymbol{$\blacksquare$}
Recall that $\Phi^{(r)}=\sum_{i=1}^k w_i^{(r)}$ is the potential function in round $r$. By linearity of expectation, the following holds for the error on the mixture of distributions:
\begin{equation}\label{doubledphi}
\begin{array}{ll@{}ll}
\text{err}_{\tilde{D}^{(r-1)}}(f^{(r)}) 
 &=\frac{1}{\Phi^{(r-1)}}\sum_{i=1}^k \left(w_i^{(r-1)}\text{err}_{D_i}(f^{(r)})\right)\\
 &\geq \frac{1}{\Phi^{(r-1)}} \sum_{i\notin G_r} \left(w_i^{(r-1)}\text{err}_{D_i}(f^{(r)})\right)
\end{array}
\end{equation}

From the VC theorem, it holds that, since $f^{(r)}=\mathcal{O}_{\mathcal{F}}(S^{(r)})$ and $\vert S^{(r)} \vert = m_{\eps'/16,\delta'}$, with probability at least $1-\delta'$, $\text{err}_{\tilde{D}^{(r-1)}}(f^{(r)}) \leq \frac{\eps'}{16}$. From Lemma~\ref{lemma:test}($b$), with probability at least $1-\delta'$, $\text{err}_{D_i}(f^{(r)})\geq \frac{\eps'}{2}$ for all $i\notin G_r$.
So with probability at least $1-2\delta'$ the two hold simultaneously. Combining these inequalities with (\ref{doubledphi}), we get that with probability at least $1-2\delta'$,
$\frac{\epsilon'}{16} \geq \frac{1}{\Phi^{(r-1)}}\sum_{i=1}^k \left(w_i^{(r-1)} \frac{\epsilon'}{2}\right) \iff \sum_{i\notin G_r} w_i^{(r-1)}\leq \frac{1}{8} \Phi^{(r-1)}.$

Since only the weights of players $i\notin G_r$ are doubled, it holds that for a given round $r$ 
$$\Phi^{(r)} \leq \Phi^{(r-1)} + \sum_{i\notin G_r} w_i^{(r-1)} \leq \frac{9}{8}\Phi^{(r-1)}.$$
Therefore with probability at least $1-2t\delta'=1-\frac{2\delta}{3}$, the inequality holds for all rounds, by union bound. By induction: 
$$\Phi^{(t)}\leq \Big(\frac{9}{8}\Big)^t \Phi^{(0)}=\Big(\frac{9}{8}\Big)^t k$$

Also, for every $i\in [k]$ it holds that $w_i^{(t)}=2^{e_i^{(t)}}$, as each weight is only doubled every time $i$ does not pass the \textsc{Test}. Since the potential function is the sum of all weights, the following inequality is true.

$w_i^{(t)}\leq \Phi^{(t)}\\
\Rightarrow 2^{e_i^{(t)}} \leq \Big( \frac{9}{8} \Big)^t k\\
\Rightarrow e_i^{(t)} \leq t\log\Big(\frac{9}{8}\Big) + \log (k)\\
\Rightarrow e_i^{(t)} \leq 0.17t+0.2t<0.4t$

So with probability at least $1-\frac{2\delta}{3}$, $e_i^{(t)} < 0.4t$ $\forall i\in[k]$.
\end{proof}

As for the total number of samples, it is the sum  of \textsc{Test}'s samples and the $m_{\eps'/16,\delta'}$ samples for each round.
Since \textsc{Test} is called $t=5\lceil \log (k) \rceil$ times and each time requests $O\Big(\frac{1}{\eps'} \ln \Big( \frac{k}{\delta'}\Big)\Big)$ samples from each of the $k$ players, the total number of samples that it requests is $O\Big(\log (k) \frac{k}{\eps'} \ln \Big( \frac{k}{\delta'}\Big)\Big)$. Substituting $\eps'=\eps/6$ and $\delta' = \delta/(3t) = \delta/(15 \lceil \log (k) \rceil)$, this yields
$$O\Big(\frac{\log (k)}{\eps} k\ln \Big(\frac{k\log (k)}{\delta}\Big)\Big)
= O\Big(\frac{\log (k)}{\eps} k\ln \Big(\frac{k}{\delta}\Big)\Big)$$
samples in total.

 In addition, the sum of the $m_{\eps'/16,\delta'}$ samples drawn in each round to learn the classifier for the mixture for $t=5\lceil \log (k) \rceil$ rounds is
$O\Big(\frac{\log (k)}{\eps'} \Big(d\ln \Big(\frac{1}{\eps'}\Big)+\ln \Big(\frac{1}{\delta'}\Big)\Big)\Big)$. Again, substituting $\eps'$ and $\delta'$, we get:
$$O\Big(\frac{\log (k)}{\eps} \Big(d\ln \Big(\frac{1}{\eps}\Big)+\ln \Big(\frac{\log (k)}{\delta}\Big)\Big)\Big)$$
samples in total.

Hence, the overall bound is:
$$O\Big(\frac{\log (k)}{\eps} \Big(d\ln \Big(\frac{1}{\eps}\Big)+k\ln \Big(\frac{k}{\delta}\Big)\Big)\Big)$$
\end{proof}

Algorithm R1 is the natural boosting alternative to the algorithm of~\cite{BHPQ17} for the centralized variant of the model. Although it is discussed in~\cite{BHPQ17} and mentioned to have the same sample complexity as their algorithm, it turns out that it is more efficient. Its sample complexity is slightly better (or the same, depending on the parameter regime) compared to the one of the algorithm for the personalized setting presented in~\cite{BHPQ17}, which is $O\Big(\frac{\log (k)}{\eps} \Big((d+k)\ln \Big(\frac{1}{\eps}\Big)+k\ln \Big(\frac{k}{\delta}\Big)\Big)\Big)$. 

However, in the setting of the lower bound in~\cite{BHPQ17} where $k=\Theta(d)$, there is a gap of $\log (k)$ multiplicatively between the sample complexity of Algorithm R1 and the lower bound. 
This difference stems from the fact that in every round, the algorithm uses roughly $\Theta(k)$ samples to find a classifier but roughly $\Theta( k\log (k))$ samples to test the classifier for $k$ tasks. 
Motivated by this discrepancy, we develop Algorithm R2, which is similar to Algorithm R1 but uses fewer samples to test the performance of each classifier on the players' distributions. 
To achieve high success probability, Algorithm R2 uses a higher number of rounds.
 
\begin{algorithm}[H]
\renewcommand{\thealgorithm}{}
\caption{R2}\label{algR2}
\begin{algorithmic}
\STATE {\bfseries Initialize:} $\forall i\in [k] $ $w_i^{(0)}:=1$; $t:=150\Big\lceil \log \Big(\frac{k}{\delta} \Big) \Big\rceil$; $\eps':=\eps/6$; $\delta':=\delta/(4t)$;
\FOR{$r = 1$ {\bfseries to} $t$}
\STATE $\tilde{D}^{(r-1)} \gets \frac{1}{\Phi^{(r-1)}}\sum_{i=1}^k \left(w_i^{(r-1)}D_i\right)$, where $\Phi^{(r-1)} = \sum_{i=1}^k w_i^{(r-1)}$;
\STATE Draw a sample set $S^{(r)}$ of size $m_{\eps'/16, \delta'}$ from $\tilde{D}^{(r-1)}$;
\STATE $f^{(r)} \gets \mathcal{O}_{\mathcal{F}}(S^{(r)})$;
\STATE $G_r \gets \textsc{FastTest}(f^{(r)}, k, \eps', \delta')$;
\STATE \textbf{Update}: $w_i^{(r)} = \begin{cases} 2w_i^{(r-1)}, & \mbox{if } i\notin G_r \\ 
w_i^{(r-1)}, & \mbox{otherwise} \end{cases}$ ;
\ENDFOR
\STATE {\bfseries return} $f_{\text{R2}} = \maj ( \{ f^{(r)} \}_{r=1}^t )$;
\STATE
\STATE {\bfseries Procedure} $\textsc{FastTest}(f^{(r)}, k, \eps', \delta')$
\FOR{$i=1$ {\bfseries to} $k$}
\STATE Draw a sample set $T_i$ of size $O\Big(\frac{1}{\eps'}\Big)$ from $D_i$;
\ENDFOR
\STATE {\bfseries return} $\{i \mid \text{err}_{T_i}(f^{(r)})\leq \frac{3}{4}\eps'\}$;
\end{algorithmic}
\end{algorithm}

More specifically, Algorithm R2 runs for $t=150\lceil \log(\frac{k}{\delta})\rceil$ rounds. 
In addition, the test it uses to identify the players for whom the classifier of the round does not perform well requires $O\Big(\frac{1}{\eps'}\Big)$ samples from each player. 
This helps us save one logarithmic factor in the second term of the sample complexity of Algorithm R1. 
We call this new test \textsc{FastTest}. 
The fact that \textsc{FastTest} uses less samples causes it to be less successful at distinguishing the players for whom the classifier was ``good'' from the players for whom it was not, meaning that it has constant probability of making a mistake for a given player at a given round. 
There are two types of mistakes that \textsc{FastTest} can make: to return $i\notin G_r$ and double the weight of $i$ when the classifier is good for $i$'s distribution and to return $i\in G_r$ and not double the weight of $i$ when the classifier is not good. 

\begin{theorem}\label{thR2}
For any $\eps, \delta \in (0,1)$, and hypothesis class $\mathcal{F}$ of VC dimension $d$, Algorithm R2 returns a classifier $f_{\text{R2}}$ with $\text{err}_{D_i}(f_{\text{R2}}) \leq \eps$ $\forall i\in[k]$ with probability at least $1-\delta$ using $m$ samples, where 
\[ m = O\Big(\frac{1}{\eps}\ln \Big(\frac{k}{\delta}\Big)\Big(d\ln\Big(\frac{1}{\eps}\Big)+k+\ln \Big(\frac{1}{\delta}\Big)\Big) \Big).\]
\end{theorem}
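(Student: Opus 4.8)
The plan is to mirror the proof of Theorem~\ref{thR1}, reusing its final averaging step essentially verbatim, while replacing the two places where the reliability of \textsc{Test} was invoked by arguments that tolerate the constant per-round failure probability of \textsc{FastTest}. First I would establish the analog of Lemma~\ref{lemma:test}: with $|T_i|=c/\eps'$ for a sufficiently large absolute constant $c$, the Chernoff bounds (\ref{chernoff1})--(\ref{chernoff3}) give, for a fixed player $i$ in a fixed round, that $\Pr[i\in G_r]\le q_2$ whenever $\text{err}_{D_i}(f^{(r)})>\eps'$ and $\Pr[i\notin G_r]\le q_1$ whenever $\text{err}_{D_i}(f^{(r)})\le \eps'/2$, where $q_1,q_2$ are constants that can be driven below any prescribed threshold by enlarging $c$. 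The essential difference from R1 is that these probabilities are constants rather than $\delta'/k$, so a naive union bound over players and rounds fails; instead the argument must exploit that $t=\Theta(\log(k/\delta))$ is large and concentrate over rounds. Set $n_i=|\{r:\text{err}_{D_i}(f^{(r)})>\eps'\}|$ (bad rounds) and $e_i=|\{r:i\notin G_r\}|$ (doublings), which are now no longer equal; the goal becomes to show $n_i<0.4t$ for every $i$ with high probability, since then more than $0.6t$ of the $f^{(r)}$ have error at most $\eps'$ on $D_i$ and the same Markov/averaging argument as in R1 yields $\text{err}_{D_i}(f_{\text{R2}})\le 6\eps'=\eps$.

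Next comes the potential argument. Conditioned on the history $\mathcal{H}_{r-1}$ through round $r-1$, the VC theorem gives $\text{err}_{\tilde{D}^{(r-1)}}(f^{(r)})\le \eps'/16$ with probability $\ge 1-\delta'$, and by Markov on the weighted error this forces the total weight of players with $\text{err}_{D_i}(f^{(r)})\ge \eps'/2$ to be at most $\Phi^{(r-1)}/8$; on the complementary $\delta'$-event I bound that weight trivially by $\Phi^{(r-1)}$. Taking expectation over the fresh \textsc{FastTest} samples, the extra weight doubled from good players ($\text{err}<\eps'/2$) contributes at most $q_1\Phi^{(r-1)}$, so $\E[\Phi^{(r)}\mid \mathcal{H}_{r-1}]\le C\,\Phi^{(r-1)}$ with $C=9/8+q_1+\delta'$. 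Hence $\Phi^{(r)}/C^{r}$ is a supermartingale, and Markov on its terminal value gives $\Phi^{(t)}\le C^{t}k/\delta_0$ with probability $\ge 1-\delta_0$. Since $w_i^{(t)}=2^{e_i}\le \Phi^{(t)}$, this yields $e_i\le (\log_2 C)\,t+\log_2(k/\delta_0)$ for all $i$ simultaneously; because $t=150\lceil\log(k/\delta)\rceil$ dominates $\log_2(k/\delta_0)$ and $\log_2 C\approx\log_2(9/8)$, I obtain $e_i\le \beta t$ for a constant $\beta$ safely below $0.4$.

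It then remains to pass from the bound on doublings $e_i$ to the bound on bad rounds $n_i$. Conditioned on $\mathcal{H}_r$ up through the choice of $f^{(r)}$, the indicator $[i\notin G_r]$ depends only on the fresh set $T_i$, so it is an independent coin of bias $\ge 1-q_2$ on every bad round; a Chernoff bound applied to these conditionally independent indicators shows that at most a constant fraction close to $q_2$ of the bad rounds go uncaught, except with probability $\exp(-\Omega(t))\le \delta/(4k)$ by the choice of the constant $150$. Thus the number of caught bad rounds is at least $(1-O(q_2))\,n_i$, and since caught bad rounds contribute to doublings, $(1-O(q_2))\,n_i\le e_i\le \beta t$, whence $n_i<0.4t$ once $c$ is chosen large enough to make $q_1,q_2$ small. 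A union bound over the $k$ players and the few failure sources (the per-round VC guarantees, the potential Markov step, and the per-player concentration of the \textsc{FastTest} outcomes), whose total probability is kept below $\delta$ by the choice $\delta'=\delta/(4t)$, completes the correctness claim.

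Finally, the sample complexity is the sum over the $t$ rounds of the $m_{\eps'/16,\delta'}$ classifier samples and the $k\cdot O(1/\eps')$ \textsc{FastTest} samples; substituting $t=150\lceil\log(k/\delta)\rceil$, $\eps'=\eps/6$, and $\delta'=\delta/(4t)$ gives $O\big(\frac{1}{\eps}\ln(\frac{k}{\delta})(d\ln(\frac{1}{\eps})+k+\ln(\frac{1}{\delta}))\big)$ as claimed, with the saved logarithmic factor coming precisely from the $O(1/\eps')$ cost of \textsc{FastTest}. I expect the main obstacle to be the high-probability control of the dependent multiplicative potential together with the matching of constants so that $\beta/(1-O(q_2))<0.4$; the large constant in $t$ and a sufficiently large $c$ in $|T_i|$ are exactly what make these constants close enough for the chain of inequalities to hold.
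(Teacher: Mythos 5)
Your proposal follows essentially the same route as the paper's proof: a constant-failure-probability version of the test lemma, an expected multiplicative growth bound on the potential followed by Markov's inequality on $\Phi^{(t)}$, and a per-player concentration bound over the $t=\Theta(\log(k/\delta))$ rounds on the number of bad rounds that go uncaught (the paper's $m_i^{(t)}<0.18t$, obtained via Azuma--Hoeffding on the supermartingale $m_i^{(r)}-0.01r$), all combined through $w_i^{(t)}\le\Phi^{(t)}$. The one loose spot is that you state the uncaught-bad-rounds bound multiplicatively in $n_i$, which cannot hold with probability $1-\exp(-\Omega(t))$ when $n_i$ is small; stating it additively, as at most roughly $q_2 t$ uncaught rounds out of all $t$, as the paper does, avoids this and still closes the chain $n_i\le e_i+0.18t<0.4t$.
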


To prove the correctness and sample complexity of Algorithm R2, we need Lemma~\ref{lemma:fasttest}, which describes the set $G_r$ that the \textsc{FastTest} returns and is proven similarly to Lemma~\ref{lemma:test}.

\begin{lemma}\label{lemma:fasttest}
$\textsc{FastTest}(f^{(r)}, k, \eps', \delta')$ is such that the following two properties hold, each with probability at least $0.99$, for given round $r\in [t]$ and player $i\in [k]$.
\begin{enumerate}[label=(\alph*)]
\item If $\text{err}_{D_i}(f^{(r)}) > \eps'$, then $i\notin G_r$.
\item If $\text{err}_{D_i}(f^{(r)}) \leq\frac{\eps'}{2}$, then $i\in G_r$.
\end{enumerate}
\end{lemma}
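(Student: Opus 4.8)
The plan is to follow the proof of Lemma~\ref{lemma:test} almost verbatim, since \textsc{FastTest} differs from \textsc{Test} only in the size of the test sets: here $\vert T_i\vert = O(1/\eps')$ rather than $O\big(\tfrac{1}{\eps'}\ln(k/\delta')\big)$. Concretely, I would fix $\vert T_i\vert = c/\eps'$ for a universal constant $c$ to be determined, and show that this single choice drives each of the three relevant Chernoff tail bounds below $0.01$. The crucial observation is that the target failure probability is now a \emph{constant} for a single fixed round $r$ and a single fixed player $i$, so there is no union bound over the $k$ players or the $t$ rounds inside this lemma; that is exactly why the $\ln(k/\delta')$ factor disappears and $O(1/\eps')$ samples suffice.

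For part (a), I would assume $\text{err}_{D_i}(f^{(r)}) > \eps'$ and bound $\Pr[i\in G_r] = \Pr[\text{err}_{T_i}(f^{(r)}) \leq \tfrac34\eps']$. Since $\tfrac34\eps' < (1-\tfrac14)\text{err}_{D_i}(f^{(r)})$, the lower-tail bound (\ref{chernoff1}) with $s=\tfrac14$ gives probability at most $\exp\big(-\tfrac{1}{32}\eps'\vert T_i\vert\big)$, exactly as in Lemma~\ref{lemma:test}(a); substituting $\vert T_i\vert = c/\eps'$ yields $\exp(-c/32)$, which is below $0.01$ once $c \geq 32\ln 100$.

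For part (b), I would reuse the same case split with $s = \frac{\eps'}{4\,\text{err}_{D_i}(f^{(r)})}$, after first recording the identical algebraic fact that $\tfrac34\eps' \geq (1+s)\text{err}_{D_i}(f^{(r)})$ under the hypothesis $\text{err}_{D_i}(f^{(r)}) \leq \eps'/2$, and disposing of the degenerate case $\text{err}_{D_i}(f^{(r)})=0$ (for which $\text{err}_{T_i}(f^{(r)})=0$ and $i\in G_r$ deterministically). When $\text{err}_{D_i}(f^{(r)}) > \eps'/4$ we have $s<1$, and the upper-tail bound (\ref{chernoff2}) gives $\exp\big(-\tfrac{\eps'^2}{48\,\text{err}}\vert T_i\vert\big) \leq \exp\big(-\tfrac{\eps'}{24}\vert T_i\vert\big)$, using $\text{err}\leq\eps'/2$; when $\text{err}_{D_i}(f^{(r)}) \leq \eps'/4$ we have $s\geq 1$, and (\ref{chernoff3}) gives $\exp\big(-\tfrac{\eps'}{12}\vert T_i\vert\big)$. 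With $\vert T_i\vert = c/\eps'$ these are at most $\exp(-c/24)$ and $\exp(-c/12)$, again below $0.01$ for the same constant $c$.

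The step I expect to require the most care is simply choosing $c$ so that one sample size simultaneously clears all three tails at the $0.01$ level: the exponents are $c/32$, $c/24$, and $c/12$, so the binding constraint is the weakest one, $c/32$, and I would set $c$ accordingly. No genuinely new idea is needed beyond Lemma~\ref{lemma:test}; the entire content is that a constant number of samples per player suffices once one only demands constant per-(round, player) reliability, deferring the amplification to high probability to the outer multiplicative-weights analysis of Theorem~\ref{thR2}, which compensates by running $t = \Theta(\log(k/\delta))$ rounds.
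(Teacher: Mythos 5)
Your proposal is correct and matches the paper's own proof essentially step for step: the same three Chernoff tail bounds, the same case split on $s = \frac{\eps'}{4\,\text{err}_{D_i}(f^{(r)})}$, and the same observation that dropping the per-player/per-round union bound is what lets a constant $c/\eps'$ samples suffice (the paper takes $c=148$, consistent with your binding constraint $c \geq 32\ln 100$). No further comment is needed.
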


\begin{proof}[Proof of Lemma 2.1]
For this proof, we assume that the number of samples $\vert T_i \vert $ for each $i\in [k]$ must be at least 
$\frac{148}{\eps'} = O\Big(\frac{1}{\eps'}\Big)$.
For given $r\in [t]$ and $i\in [k]$:
\begin{enumerate}[label=(\alph*)]
\item Assume $\text{err}_{D_i}(f^{(r)}) > \eps'$. Then

$\Pr\Big[i\in G_r\Big]\\
 = \Pr \Big[\text{err}_{T_i}(f^{(r)}) \leq \frac{3}{4}\eps'\Big]\\
 < \Pr\Big[\text{err}_{T_i}(f^{(r)}) \leq \Big(1-\frac{1}{4}\Big)\text{err}_{D_i}(f^{(r)})\Big]\\
 \stackrel{\text{(\ref{chernoff1})}}{\leq} \exp\Big(-\frac{1}{2}\Big(\frac{1}{4}\Big)^2\text{err}_{D_i}(f^{(r)}) \vert T_i \vert\Big)\\
 < \exp\Big(-\frac{1}{32}\eps'\vert T_i \vert\Big)\\
 \leq \exp\Big(-\frac{1}{32}\eps'\frac{148}{\eps'}\Big)\\
 < 0.01.$

Hence, $\text{err}_{D_i}(f^{(r)}) > \eps' \Rightarrow i\notin G_r$ holds with probability at least $0.99$.

\item Assume $\text{err}_{D_i}(f^{(r)}) \leq \frac{\eps'}{2}$. We consider two cases and we apply the Chernoff bounds with $s = \frac{\eps'}{4\text{err}_{D_i}(f^{(r)})}$.
Note that if $\text{err}_{D_i}(f^{(r)})=0$ then $\text{err}_{T_i}(f^{(r)})=0$ and the property holds. So we only need to consider $\text{err}_{D_i}(f^{(r)})\neq0$.
First, we need to prove that

$\frac{3\eps'}{4} \geq (1+ s)\text{err}_{D_i}(f^{(r)})\\
\iff \frac{3\eps'}{4\text{err}_{D_i}(f^{(r)})} \geq 1+ \frac{\eps'}{4\text{err}_{D_i}(f^{(r)})}\\
\iff \frac{\eps'}{2\text{err}_{D_i}(f^{(r)})} \geq 1,$

which is true.

\begin{enumerate}[label=\textit{Case \arabic*.}]
\item If $\text{err}_{D_i}(f^{(r)}) > \frac{\eps'}{4}$, which implies $s<1$, then

$\Pr\Big[i\notin G_r\Big]\\
 = \Pr \Big[\text{err}_{T_i}(f^{(r)}) > \frac{3}{4}\eps'\Big]\\
\leq \Pr\Big[\text{err}_{T_i}(f^{(r)}) \geq \Big(1+s\Big)\text{err}_{D_i}(f^{(r)})\Big]\\
 \stackrel{\text{(\ref{chernoff2})}}{\leq} \exp\Big(-\frac{1}{3}\Big(\frac{\eps'}{4\text{err}_{D_i}(f^{(r)})}\Big)^2\text{err}_{D_i}(f^{(r)}) \vert T_i \vert\Big)\\
 = \exp\Big(-\frac{\eps'^2}{48\text{err}_{D_i}(f^{(r)})}\vert T_i \vert\Big)\\
 \leq \exp\Big(-\frac{1}{48}2\eps' \frac{148}{\eps'}\Big)\\
 < 0.01.$
 
\item If $\text{err}_{D_i}(f^{(r)}) \leq \frac{\eps'}{4}$, which implies $s\geq 1$, then

$\Pr\Big[i\notin G_r\Big]\\
 = \Pr \Big[\text{err}_{T_i}(f^{(r)}) > \frac{3}{4}\eps'\Big]\\
\leq \Pr\Big[\text{err}_{T_i}(f^{(r)}) \geq \Big(1+s\Big)\text{err}_{D_i}(f^{(r)})\Big]\\
 \stackrel{\text{(\ref{chernoff3})}}{\leq} \exp\Big(-\frac{1}{3}\frac{\eps'}{4\text{err}_{D_i}(f^{(r)})}\text{err}_{D_i}(f^{(r)}) \vert T_i \vert\Big)\\
 = \exp\Big(-\frac{\eps'}{12}\vert T_i \vert\Big)\\
 \leq \exp\Big(-\frac{\eps'}{12} \frac{148}{\eps'}\Big)\\
 < 0.01.$
\end{enumerate}
Hence, $\text{err}_{D_i}(f^{(r)}) \leq \frac{\eps'}{2} \Rightarrow i\in G_r$ holds with probability at least $0.99$.
\end{enumerate}
\end{proof}

\begin{proof}[Proof of Theorem~\ref{thR2}]
First, we prove that Algorithm R2 indeed learns a good classifier, meaning that, with probability at least $1-\delta$, for every player $i\in [k]$ the returned classifier $f_{\text{R2}}$ has error $\text{err}_{D_i}(f_{\text{R2}})\leq \eps$. Let $e_i^{(t)}$ be the number of rounds for which the classifier's error on $D_i$ was more than $\eps'$, i.e. $e_i^{(t)} = \vert \{r \mid r \in [t] \text{ and } \text{err}_{D_i}(f^{(r)})>\eps'\}\vert$.

\begin{claim}[]\label{claim2}
With probability at least $1-\delta$, $e_i^{(t)}<0.4t$ $\forall i\in[k]$.
\end{claim}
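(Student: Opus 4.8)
The plan is to separate two effects that were fused together in Algorithm R1: the event that round $r$ is \emph{bad} for player $i$, i.e. $\text{err}_{D_i}(f^{(r)}) > \eps'$, which is exactly what $e_i^{(t)}$ counts, and the event that $i$'s weight is \emph{doubled}, i.e. $i\notin G_r$. Because \textsc{FastTest} only succeeds with constant probability, these two events no longer coincide with high probability, so I cannot read $e_i^{(t)}$ directly off the final weights as in R1. Let $a_i^{(t)} = |\{r\in[t] : i\notin G_r\}|$ denote the number of doublings, so that $w_i^{(t)} = 2^{a_i^{(t)}}$, and let $N_i = |\{r : \text{err}_{D_i}(f^{(r)}) > \eps' \text{ and } i\in G_r\}|$ count the bad rounds in which the doubling was nevertheless \emph{missed}. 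Each bad round is either doubled or missed, and the doubled bad rounds are a subset of all doublings, so $e_i^{(t)} \le a_i^{(t)} + N_i$. The strategy is to bound $a_i^{(t)}$ by a potential argument and $N_i$ by a concentration argument, arranging that each contributes only a small fraction of $t$.

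For the doublings I would first control the potential in expectation. Condition on a draw of $S^{(r)}$ for which the oracle succeeds (probability at least $1-\delta'$ by the VC theorem, since $|S^{(r)}| = m_{\eps'/16,\delta'}$); then the weighted-error bound $\text{err}_{\tilde D^{(r-1)}}(f^{(r)}) \le \eps'/16$ forces $\sum_{i:\,\text{err}_{D_i}(f^{(r)})>\eps'/2} w_i^{(r-1)} \le \tfrac18\Phi^{(r-1)}$, exactly as in the R1 analysis. Combining this with Lemma~\ref{lemma:fasttest}(b), which says a player with error at most $\eps'/2$ is doubled with probability at most $0.01$, a round-by-round computation (doubling probability at most $0.01$ on the bulk of the weight, at most $1$ on the at-most-$\tfrac18\Phi^{(r-1)}$ remaining weight) gives $\E[\Phi^{(r)}\mid \mathcal{H}_{r-1}] \le \tfrac65\,\Phi^{(r-1)}$, where $\mathcal{H}_{r-1}$ is the history through round $r-1$ and the slack absorbs the probability-$\delta'$ oracle failure via the crude factor-$2$ worst case. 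Hence $M_r := \Phi^{(r)}/(6/5)^r$ is a supermartingale with $\E[\Phi^{(t)}] \le (6/5)^t k$, and Markov's inequality gives $\Phi^{(t)} \le (6/5)^t k\cdot(4/\delta)$ with probability at least $1-\delta/4$. Since $w_i^{(t)}\le\Phi^{(t)}$ for every $i$, taking logarithms yields $a_i^{(t)} \le t\log_2(6/5) + \log_2(4k/\delta)$ simultaneously for all players. The crude Markov step is affordable precisely because we only need $\log\Phi^{(t)}$, so the $1/\delta$ factor becomes a benign additive $\log(1/\delta)$ term.

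For the missed doublings I would view $N_i$ as a sum of indicators whose conditional probability, given $\mathcal{H}_{r-1}$ and $f^{(r)}$, is at most $0.01$ on bad rounds by Lemma~\ref{lemma:fasttest}(a). A Chernoff bound in its martingale form (the increments are conditionally independent and bounded) then gives $N_i \le 0.05\,t$ except with probability $\exp(-\Omega(t))$; a union bound over the $k$ players costs another $\delta/4$. The purpose of the enlarged round count $t = 150\lceil\log(k/\delta)\rceil$ is to make all additive error terms small fractions of $t$: plugging $t\ge 150\log(k/\delta)$ into $e_i^{(t)} \le a_i^{(t)} + N_i \le t\log_2(6/5) + \log_2(4k/\delta) + 0.05t$ and noting $\log_2(4k/\delta)\ll t$ gives $e_i^{(t)} < 0.4t$ for all $i$, with total failure probability at most $\delta/4+\delta/4+\delta/4 \le \delta$. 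The main obstacle is the first step: unlike in R1, the doubling pattern is genuinely random and only loosely coupled to the error pattern, so I must track expected potential growth rather than a deterministic per-round inequality, and then separately pay for the rounds in which \textsc{FastTest} guesses wrong — both of which are controllable only because $t$ is logarithmically large.
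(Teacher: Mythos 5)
Your proposal is, in essence, the paper's own proof: the decomposition $e_i^{(t)} \le a_i^{(t)} + N_i$ (the paper calls your $N_i$ by $m_i^{(t)}$ and extracts $a_i^{(t)} \ge e_i^{(t)} - m_i^{(t)}$ from $w_i^{(t)} = 2^{a_i^{(t)}} \le \Phi^{(t)}$), the expected-potential-plus-Markov bound on the doublings, and a supermartingale concentration bound on the missed doublings, with a union bound over players. Your treatment of the VC-oracle failure --- absorbing it into $\E[\Phi^{(r)}\mid\mathcal{H}_{r-1}]$ via the worst-case factor $2$ rather than conditioning on a global success event --- is if anything cleaner than the paper's conditioning.

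The one place you must be careful is the interaction of your constants. The paper pairs the growth rate $1.135$ (so $a_i^{(t)} \lesssim 0.183\,t$) with the Azuma--Hoeffding threshold $m_i^{(t)} < 0.18\,t$, and $0.183+0.18+\text{(small)}<0.4$. You round the growth rate up to $6/5$, so $a_i^{(t)} \lesssim \log_2(6/5)\,t \approx 0.263\,t$, which forces you to prove $N_i \le 0.05\,t$ rather than $0.18\,t$. At that threshold the \emph{additive} Azuma--Hoeffding bound (which is what ``bounded increments'' gives, and what the paper uses) yields only $\exp(-(0.04t)^2/(2t)) = \exp(-0.0008\,t)$, and with $t = 150\lceil\log(k/\delta)\rceil$ this is roughly $(k/\delta)^{-0.17}$ --- nowhere near the $\delta/(4k)$ you need for the union bound (you need a tail of about $\exp(-t/72)$). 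So as stated, that step fails; and if you fall back to the Azuma-achievable threshold $0.18\,t$, then $0.263+0.18>0.4$ and the conclusion is lost. The fix is easy and stays inside your argument: either invoke the \emph{multiplicative} (relative-entropy) form of the martingale Chernoff bound, under which $\Pr[N_i \ge 0.05\,t] \le \exp(-\Omega(0.01\,t))$ with a constant around $4$, comfortably beating $\delta/(4k)$; or keep Azuma at $0.18\,t$ but tighten your potential growth to $1.135+2\delta' \le 1.14$, which your own computation already delivers --- the rounding to $6/5$ was an unnecessary giveaway.
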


If the claim holds, then with probability at least $1-\delta$, less than $0.4t$ functions have error more than $\eps'$ on $D_i$, $\forall i\in [k]$. Therefore, with probability at least $1-\delta$, $\text{err}_{D_i}(f_{\text{R2}}) \leq \frac{0.6}{0.1}\eps' \leq \eps$ for every $i\in [k]$.

\begin{proof}[Proof of Claim~\ref{claim2}]
\renewcommand\qedsymbol{$\blacksquare$}
Let us denote by $I^{(r)}$ the set of players having  $\text{err}_{D_i}(f^{(r)}) > \frac{\eps'}{2}$ in round $r$, i.e., $I^{(r)}= \{i \in [k] \mid \text{err}_{D_i}(f^{(r)}) > \frac{\eps'}{2}\}$. 
We condition on the randomness in the first $r-1$ rounds and compute $\E[\Phi^{(r)} \mid \Phi^{(r-1)}]$. By linearity of expectation, the following hold for round $r$:
\begin{equation}\label{doubledq}
\text{err}_{\tilde{D}^{(r-1)}}(f^{(r)}) 
 =\frac{1}{\Phi^{(r-1)}}\sum\limits_{i=1}^k \left(w_i^{(r-1)}\text{err}_{D_i}(f^{(r)})\right)
 \geq \frac{1}{\Phi^{(r-1)}} \sum\limits_{i\in I^{(r)} \setminus G_r} \left(w_i^{(r-1)}\text{err}_{D_i}(f^{(r)}) \right)
\end{equation}
By the definition of $I^{(r)}$, $\text{err}_{D_i}(f^{(r)}) > \frac{\eps'}{2}$ for $i\in I^{(r)}$. 
From the VC theorem, with probability at least $1-\delta'$, $\text{err}_{\tilde{D}^{(r-1)}}(f^{(r)}) \leq \frac{\eps'}{16}$. 
Using these two bounds and inequality~(\ref{doubledq}), it follows that with probability at least $1-\delta'$,
\begin{equation}\label{qbound1}
 \sum_{i\in I^{(r)}\setminus G_r} w_i^{(r-1)} \leq \frac{1}{8}\Phi^{(r-1)}.
\end{equation}

For the rest of the analysis, we will condition our probability space to the event that inequality~(\ref{qbound1}) holds for all $t$ rounds. By the union bound, this event happens with probability $1-t\delta' = 1-\delta/4$.

Consider the set of players $i\notin I^{(r)}\cup G_r$. These are the players for whom the classifier of the round performed well but \textsc{FastTest} made a mistake and did not include them in the set $G_r$. By linearity of expectation:
\begin{equation}\label{qbound2}
\begin{array}{ll@{}ll}
\E[\sum\limits_{i\notin G_r}{w_i^{(r-1)}}  \mid \Phi^{(r-1)}] &
=\E\left[\sum\limits_{i\in I^{(r)}\setminus G_r} w_i^{(r-1)} +  \sum\limits_{i\notin I^{(r)}\cup G_r} w_i^{(r-1)} \middle| \Phi^{(r-1)}\right]\\
& \stackrel{\text{(\ref{qbound1}), Lemma~\ref{lemma:fasttest}(b)}}{\leq} (0.125 + 0.01)\Phi^{(r-1)}
\end{array}
\end{equation}
Thus, the expected value of the potential function in round $r$ conditioned on its value in the previous round is bounded by
\begin{align*}
\E[\Phi^{(r)} \mid \Phi^{(r-1)}] 
= \E\left[\sum\limits_{i=1}^k w_i^{(r-1)} + \sum\limits_{i\notin G_r} w_i^{(r-1)}\middle| \Phi^{(r-1)}\right] 
\stackrel{(\ref{qbound2})}{\leq} 1.135\Phi^{(r-1)}.
\end{align*}

By the definition of the expected value, this implies that $\E[\Phi^{(r)}]\leq 1.135 \E[\Phi^{(r-1)}]$.
Conditioned on the fact that inequality~(\ref{qbound1}) holds for all rounds, which is true with probability at least $1-\frac{\delta}{4}$, we can conclude that $\E[\Phi^{(t)}]\leq k(1.135)^t$, by induction.
Using Markov's inequality we can state that $\Pr\Big[\Phi^{(t)}\geq \frac{\E[\Phi^{(t)}]}{\delta/2}\Big]\leq \delta/2$.
It follows that with probability at least $1-\frac{\delta}{4}-\frac{\delta}{2}=1-\frac{3\delta}{4}$
\begin{equation}\label{Phibound}
\Phi^{(t)}\leq \frac{2k(1.135)^t}{\delta}.
\end{equation}

We now need a lower bound for $w_i^{(t)}$. Let $m_i^{(r)}$ denote the number of rounds $r'$, up until and including round $r$, for which the procedure \textsc{FastTest} made a mistake and returned $i \in G_{r'}$ although $\text{err}_{D_i}(f^{(r')})> \eps'$.
From Lemma~\ref{lemma:fasttest}($a$), it follows that $\E[m_i^{(r)} - m_i^{(r-1)}] \leq 0.01$ so for $M_i^{(r)}=m_i^{(r)} - 0.01r$ it holds that $\E[M_i^{(r)} \mid M_i^{(r-1)}] \leq M_i^{(r-1)}$.
Therefore, the sequence $\{M_i^{(r)}\}_{r=0}^t$ is a super-martingale. In addition to this, since we can make at most one mistake in each round, it holds that $M_i^{(r)}-M_i^{(r-1)}< 1$. Using the Azuma-Hoeffding inequality with $M_i^{(0)}=m_i^{(0)}-0.01\cdot 0=0$ and the fact that $t\geq150$ we calculate that
\[\Pr\left[m_i^{(t)}\geq 0.18t\right] \leq \exp\Big(-\frac{(0.17t)^2}{2t}\Big)\leq \frac{\delta}{4k}.\]
By union bound, $m_i^{(t)}<0.18t$ holds $\forall i\in [k]$ with probability at least $1-\frac{\delta}{4}$.

The number of times a weight is doubled throughout the algorithm is $\log (w_i^{(t)})$ and it is at least the number of times the error of the classifier was more than $\eps'$ minus the number of times the error was more than $\eps'$ but the \textsc{FastTest} made a mistake, which is exactly $e_i^{(t)}-m_i^{(t)}$.
So $w_i^{(t)} \geq 2^{e_i^{(t)}-m_i^{(t)}}> 2^{e_i^{(t)} -0.18t}$ holds for all $i\in [k]$ with probability at least $1-\frac{\delta}{4}$.
Combining this with the bound from inequality~(\ref{Phibound}) we have that with probability at least $1-\delta$:

$w_i^{(t)}\leq \Phi^{(t)} 
\Rightarrow 2^{e_i^{(t)} -0.18t} < \frac{2k(1.135)^t}{\delta}
\Rightarrow e_i^{(t)} -0.18t < 1+\log \Big(\frac{k}{\delta} \Big) + t\log(1.135)\\
\Rightarrow e_i^{(t)} < 0.18t+\frac{1}{150}t + \frac{1}{150}t + 0.183t < 0.4t$
\end{proof}

It remains to bound the number of samples. \textsc{FastTest} is called $t=150\lceil \log (\frac{k}{\delta}) \rceil$ times, so it requires
$O\Big(\log \Big(\frac{k}{\delta} \Big) \frac{k}{\eps'}\Big) = O\Big(\frac{k}{\eps}\log \Big(\frac{k}{\delta} \Big)\Big)$
samples in total.
The number of samples required to learn each round's classifier is $m_{\eps'/16,\delta'}$, so for all rounds there are required
$O\Big(\log \Big(\frac{k}{\delta} \Big) \frac{1}{\eps'} \Big(d\ln \Big(\frac{1}{\eps'}\Big)+\ln \Big(\frac{1}{\delta'}\Big)\Big)\Big)$ samples. Substituting $\eps'=\eps/6$ and $\delta'=\delta/(4t)=\delta/\left(600\left\lceil\log \left(\frac{k}{\delta}\right)\right\rceil\right)$ we get
$O\Big(\frac{1}{\eps}\log \Big(\frac{k}{\delta} \Big) \Big(d\ln \Big(\frac{1}{\eps}\Big)+ \ln \Big(\frac{\log (k)}{\delta}\Big) \Big)\Big)$
samples in total.
From the addition of the two bounds above, the overall sample complexity bound is:
\[O\Big(\frac{1}{\eps}\ln \Big(\frac{k}{\delta} \Big) \Big(d\ln \Big(\frac{1}{\eps}\Big)+ k+\ln \Big(\frac{1}{\delta}\Big) \Big)\Big)\]
\end{proof}

\section{Sample complexity upper bounds for the non-realizable setting}\label{non_realizable}
We design Algorithms NR1 and NR2 for the non-realizable setting, which generalize the results of Algorithms R1 and R2, respectively. 

\begin{theorem}\label{thNR1}
For any $\eps, \delta \in (0,1)$, $7\eps/6<\alpha<1$, and hypothesis class $\mathcal{F}$ of VC dimension $d$, Algorithm NR1 returns a classifier $f_{\text{NR1}}$ such that $\text{err}_{D_i}(f_{\text{NR1}}) \le (2+\alpha)\OPT+ \eps$ holds for all $i\in[k]$ with probability $1-\delta$ using $m$ samples, where 
\[m = O\Big(\frac{\ln (k)}{\alpha^4 \eps}\Big(d\ln\Big(\frac{1}{\eps}\Big)+k\ln\Big(\frac{k}{\delta}\Big)\Big)\Big).\]
\end{theorem}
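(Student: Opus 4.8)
The plan is to mirror the three-part structure of the proof of Theorem~\ref{thR1}—a Chernoff-based test lemma describing $G_r$, a potential-function claim bounding the number of ``bad'' rounds per player, and a final majority-vote aggregation—but with every threshold now measured on the scale of $\OPT$ instead of $\eps'$, and with the in-class minimax hypothesis $f^*$ (satisfying $\text{err}_{D_i}(f^*)\le\OPT$ for all $i$) introduced as a benchmark to absorb the unavoidable baseline error.

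First I would establish the per-round guarantee that replaces the ``$\text{err}_{\tilde D^{(r-1)}}(f^{(r)})\le\eps'/16$'' step of the realizable analysis. Since $f^{(r)}=\mathcal O_{\mathcal F}(S^{(r)})$ is an approximate empirical risk minimizer on a sample drawn from $\tilde D^{(r-1)}$, the non-realizable VC guarantee gives, with probability $1-\delta'$, $\text{err}_{\tilde D^{(r-1)}}(f^{(r)})\le(1+\alpha)\,\text{err}_{\tilde D^{(r-1)}}(f^*)+\eps_0\le(1+\alpha)\OPT+\eps_0$, where $\eps_0$ is a small multiple of $\eps$ times a power of $\alpha$, namely the accuracy at which I run the oracle. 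The key refinement is to subtract the benchmark: this yields $\sum_i w_i^{(r-1)}\big(\text{err}_{D_i}(f^{(r)})-\text{err}_{D_i}(f^*)\big)\le\Phi^{(r-1)}(\alpha\OPT+\eps_0)$, which isolates the $\alpha\OPT$ \emph{slack} (to be charged against the weights) from the $\OPT$ \emph{baseline} (to be handled later by $f^*$), and is what ultimately lets the leading constant in front of $\OPT$ stay $\alpha$-free.

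Next I would prove the analogue of Lemma~\ref{lemma:test}: \textsc{Test} still samples each player and thresholds the empirical error, but now the upper and lower thresholds sit near a multiple of $\OPT$ and are separated only by an $\Theta(\alpha)$-relative gap. Applying the multiplicative Chernoff bounds of Lemma~\ref{lemma:chernoff} exactly as in Lemma~\ref{lemma:test} (splitting into the $s<1$ and $s\ge1$ cases) shows that $\mathrm{poly}(1/\alpha)\cdot\frac1\eps\ln(k/\delta')$ samples per player suffice to separate players above the upper threshold from those below the lower one with probability $1-\delta'$; the inverse-polynomial-in-$\alpha$ blow-up is the cost of resolving an $\alpha$-fraction gap. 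A potential-function claim in the style of Claim~\ref{claim1} then controls $\Phi^{(t)}$: players that fail the test have error at least the lower threshold, so the weighted mass of failing players is a controlled fraction of $\Phi^{(r-1)}$, the potential grows by at most a constant factor per round, and hence over the $t=\Theta(\log k)$ rounds each player fails in strictly fewer than a constant fraction of the rounds.

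Finally, aggregation. Rather than bound $\text{err}_{D_i}(f_{\text{NR1}})$ head-on I would route through $f^*$, writing $\text{err}_{D_i}(f_{\text{NR1}})\le\text{err}_{D_i}(f^*)+\Pr_{D_i}[f_{\text{NR1}}\neq f^*]\le\OPT+\Pr_{D_i}[f_{\text{NR1}}\neq f^*]$, and then controlling the disagreement term by the counting/majority argument restricted to the strict majority of rounds in which $f^{(r)}$ is close to $f^*$ on $D_i$. The step I expect to be the main obstacle is exactly this coordination: a naive transplant of the realizable majority argument multiplies the per-round threshold by the amplification constant ($\approx 6$) and would leave a large multiple of $\OPT$, so the delicate point is to balance the doubling threshold, the potential-growth rate, and the $f^*$-triangle inequality so that the factor in front of $\OPT$ collapses to precisely $2+\alpha$—the ``$1$'' from $\text{err}_{D_i}(f^*)$ plus the ``$1+\alpha$'' from the disagreement—while the minority of bad rounds contributes only to the additive $\eps$. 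Summing samples over the $t=\Theta(\log k)$ rounds, $m_{\Theta(\mathrm{poly}(\alpha)\eps),\delta'}$ per round for the oracle together with the $\mathrm{poly}(1/\alpha)\cdot\frac k\eps\ln(k/\delta')$ per round for the test, gives the stated $O\!\big(\frac{\ln k}{\alpha^4\eps}(d\ln(1/\eps)+k\ln(k/\delta))\big)$ bound, the four powers of $1/\alpha$ accumulating from the $1/\alpha$ already present in the non-realizable VC sample bound and the refined accuracies forced on the oracle and the test by the $\alpha$-scale thresholds.
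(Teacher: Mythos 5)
Your high-level skeleton (a per-round VC guarantee measured against $\OPT$, a potential-function argument, majority aggregation) matches the paper's, but the two load-bearing components are replaced by steps that do not work. First, Algorithm NR1 does not use a threshold \textsc{Test} with doubling at all: it uses the smooth update $w_i^{(r)} = w_i^{(r-1)}(1+s_i^{(r)})$ with $s_i^{(r)}$ \emph{proportional} to the measured error $\text{err}_{T_i}(f^{(r)})$, normalized by the empirical proxy $(1+3\alpha')\text{err}_{S^{(r)}}(f^{(r)})+3\eps'$ for $\OPT$ (which also answers a question your plan leaves open: thresholds ``near a multiple of $\OPT$'' cannot be implemented, since $\OPT$ is unknown). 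This difference is not cosmetic. A threshold test only bounds the \emph{number} of rounds in which $\text{err}_{D_i}(f^{(r)})$ exceeds some level $c\cdot\OPT$, and the potential argument forces $c$ to be a largish constant (the weighted mass of failing players is at most $\frac{(1+\alpha)\OPT+\eps_0}{c\,\OPT}$ of $\Phi^{(r-1)}$, so $c\approx 8$ is needed for the potential to grow slowly over $\Theta(\log k)$ rounds); the majority argument then yields error $O(c\,\OPT)$, a large multiple of $\OPT$, exactly the failure you anticipate. The smooth update resolves this tension because the potential bound controls $\sum_r s_i^{(r)}$ and hence the \emph{sum} $\sum_{r\in G_i}\text{err}_{D_i}(f^{(r)})\le (1+O(\alpha'))\OPT\cdot t$ over the good rounds, even though individual rounds may have error far above $\OPT$. (It also forces $t=\Theta(\ln(k)/\alpha'^3)$ rounds, not $\Theta(\log k)$; that is where the $\alpha^4$ actually comes from, together with the $1/(\alpha'\eps')$ test samples per player per round.)

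Second, your proposed fix for the aggregation --- routing through $f^*$ via $\text{err}_{D_i}(f_{\text{NR1}})\le\OPT+\Pr_{D_i}[f_{\text{NR1}}\neq f^*]$ and reading the constant as ``$1$ from $f^*$ plus $1+\alpha$ from the disagreement'' --- does not produce the factor $2+\alpha$ and in fact makes things worse: each $f^{(r)}$ disagrees with $f^*$ with probability up to $\text{err}_{D_i}(f^{(r)})+\OPT\approx 2\OPT$, so any majority bound on the disagreement is at least $\approx 4\OPT$ and the triangle inequality gives $\ge 5\OPT$. The paper's factor $2$ comes from a Markov-type averaging applied directly to the error: a point misclassified by the majority must be misclassified by at least $t/2-|[t]\setminus G_i|$ of the good classifiers, so $\text{err}_{D_i}(f_{\text{NR1}})\le \frac{\sum_{r\in G_i}\text{err}_{D_i}(f^{(r)})}{t/2-|[t]\setminus G_i|}\le\frac{(1+O(\alpha'))\OPT+O(\eps')}{1/2-O(\alpha')}$, i.e.\ the $2$ is the reciprocal of the majority threshold $1/2$, not a sum $1+(1+\alpha)$. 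Without the error-proportional update (to get the sum bound) and this averaging step (to convert it into a $2+\alpha$ factor), the proof as outlined cannot reach the stated guarantee.
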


\begin{theorem}\label{thNR2}
For any $\eps, \delta \in (0,1)$, $5\eps/4<\alpha<1$, and hypothesis class $\mathcal{F}$ of VC dimension $d$, Algorithm NR2 returns a classifier $f_{\text{NR2}}$ such that $\text{err}_{D_i}(f_{\text{NR2}}) \le (2+\alpha)\OPT+ \eps$ holds for all $i\in[k]$ with probability $1-\delta$ using $m$ samples, where 
\[m = O\Big(\frac{1}{\alpha^4 \eps}\ln\Big(\frac{k}{\delta}\Big)\Big(d\ln\Big(\frac{1}{\eps}\Big)+k\ln\Big(\frac{1}{\alpha}\Big)+\ln\Big(\frac{1}{\delta}\Big)\Big)\Big).\]
\end{theorem}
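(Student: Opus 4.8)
The plan is to follow the architecture of the proof of Theorem~\ref{thR2}, generalizing each of its three pieces --- the test lemma, the potential/weight analysis, and the majority-vote conclusion --- so that they carry the additive term $\OPT$, while departing from the realizable analysis at exactly one point in order to obtain the sharp constant. Throughout I fix a near-optimal $f^*\in\mathcal F$ with $\max_{i\in[k]}\text{err}_{D_i}(f^*)\le\OPT$ (up to arbitrarily small slack), so that on every weighted mixture $\tilde D^{(r-1)}$ we have $\text{err}_{\tilde D^{(r-1)}}(f^*)\le\OPT$. The one genuinely new ingredient relative to the realizable case is that the per-round call to the oracle $\mathcal O_{\mathcal F}$ is now invoked with its non-realizable guarantee: on a sample of size $m_{\eps'',\delta'}$ the returned $f^{(r)}$ satisfies $\text{err}_{\tilde D^{(r-1)}}(f^{(r)})\le(1+\alpha)\,\text{err}_{\tilde D^{(r-1)}}(f^*)+\eps''\le(1+\alpha)\OPT+\eps''$, where $\eps''$ is an accuracy parameter I will take polynomially small in $\alpha$ (at scale $\Theta(\alpha^2\eps)$). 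This inequality replaces the bound $\text{err}_{\tilde D^{(r-1)}}(f^{(r)})\le\eps'/16$ used in the realizable analysis.

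First I would restate Lemma~\ref{lemma:fasttest} for the non-realizable test, with the two empirical thresholds separated by only a $\Theta(\eps'')$ gap rather than a constant factor, so that for each fixed $(r,i)$, with probability $1-\Theta(\alpha)$: if $\text{err}_{D_i}(f^{(r)})$ is above the high threshold then $i\notin G_r$, and if it is below the low threshold then $i\in G_r$. The three Chernoff computations of Lemma~\ref{lemma:fasttest} (via Lemma~\ref{lemma:chernoff}) go through essentially verbatim; the only changes are that the sampling level is now $\Theta(\OPT+\eps'')$ rather than $\Theta(\eps')$, forcing $\Theta(1/\eps'')$ samples per player, and that the target confidence is $1-\Theta(\alpha)$ rather than a constant, which multiplies the sample size by $\Theta(\ln(1/\alpha))$. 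These two effects are exactly the source of the extra $\alpha$-factors and of the $k\ln(1/\alpha)$ term in the final sample bound.

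The crux is to prove, in place of Claim~\ref{claim2}, a bound on the \emph{average} per-round error rather than on the \emph{number} of bad rounds: that with probability $1-\delta$, $\frac1t\sum_{r=1}^t\text{err}_{D_i}(f^{(r)})\le(1+\alpha)\OPT+O(\eps)$ for all $i\in[k]$ simultaneously. As in Claim~\ref{claim2}, I would run the multiplicative-weights potential argument: the one-step inequality $\text{err}_{\tilde D^{(r-1)}}(f^{(r)})\ge\frac1{\Phi^{(r-1)}}\sum_{i\notin G_r}w_i^{(r-1)}\text{err}_{D_i}(f^{(r)})$ combined with the mixture bound $(1+\alpha)\OPT+\eps''$ controls the growth of $\E[\Phi^{(r)}\mid\Phi^{(r-1)}]$, Markov's inequality bounds $\Phi^{(t)}$, and an Azuma--Hoeffding bound on the number of \textsc{FastTest} misclassifications absorbs the per-test failure probability, exactly as before. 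The comparison $w_i^{(t)}\le\Phi^{(t)}$, together with the fact that $w_i^{(t)}$ grows by a factor reflecting the error charged to $i$ in each round, then converts the global potential bound into this per-player average-error bound. Taking $t=\Theta(\alpha^{-1}\log(k/\delta))$ rounds makes the residual $\Theta(\log k/t)$ slack in the conversion smaller than the allotted $\eps$ budget.

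The majority step is where I expect the main obstacle to lie, because the count-based argument of Theorem~\ref{thR2} pays a constant factor of $6$, whereas the target $2+\alpha$ forces a factor of exactly two; the resolution is that the deterministic majority costs precisely twice the average. For any point $(x,y)$, $f_{\text{NR2}}=\maj(\{f^{(r)}\})$ is wrong only if at least half of the round classifiers are wrong, so $\mathbf 1[\,f_{\text{NR2}}\text{ wrong at }x\,]\le 2\cdot\frac1t\big|\{r:f^{(r)}(x)\ne y\}\big|$. Taking expectations over $(x,y)\sim D_i$ and applying the average-error bound yields $\text{err}_{D_i}(f_{\text{NR2}})\le 2\big((1+\alpha)\OPT+O(\eps)\big)$, which after rescaling $\alpha$ and $\eps$ by constants is $(2+\alpha)\OPT+\eps$ (the hypothesis $5\eps/4<\alpha$ ensures this rescaling stays in a consistent range). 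The delicate point is that this factor-two bound genuinely needs control of the per-player average round error, \emph{including} the bad rounds, and reconciling that with a test that estimates each player's error only to constant confidence is where the bookkeeping must be most careful. Finally I would total the samples: $t$ oracle calls at $m_{\eps'',\delta'}=\frac{C}{\eps''\alpha}\big(d\ln(1/\eps'')+\ln(1/\delta')\big)$ each, plus $t$ calls of \textsc{FastTest} drawing $\Theta(\eps''^{-1}\ln(1/\alpha))$ samples from each of the $k$ players, and substitute $\eps''=\Theta(\alpha^2\eps)$, $t=\Theta(\alpha^{-1}\log(k/\delta))$, and $\delta'=\Theta(\delta/t)$ to obtain $O\big(\frac1{\alpha^4\eps}\ln(k/\delta)(d\ln(1/\eps)+k\ln(1/\alpha)+\ln(1/\delta))\big)$.
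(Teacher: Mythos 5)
There is a genuine gap, and it sits exactly where you flag "the delicate point": the architecture you port from Theorem~\ref{thR2} --- a threshold test returning a set $G_r$ plus weight doubling --- does not survive the passage to the non-realizable setting. The realizable potential argument works because the mixture-error bound ($\eps'/16$) is a constant factor \emph{below} the badness threshold ($\eps'/2$), giving $\sum_{i\notin G_r}w_i^{(r-1)}\le\frac18\Phi^{(r-1)}$ and hence $e_i^{(t)}\le t\log(9/8)+\log(\cdot)$. In your setting the mixture bound is $(1+\alpha)\OPT+\eps''$ while the badness threshold must sit at $(1+O(\alpha))\OPT+O(\eps'')$ to preserve the target approximation factor; when $\OPT\gg\eps''$ the ratio of these is $1-O(\alpha)$ rather than $1/8$, so the potential inequality becomes $\sum_{i\notin G_r}w_i^{(r-1)}\le(1-O(\alpha))\Phi^{(r-1)}$ and the derived bound on the number of bad rounds is vacuous (pushing the threshold up by a constant factor to fix this destroys the factor $2+\alpha$). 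This is precisely why Algorithm NR2 abandons the in/out test and doubling altogether in favor of the smooth update $w_i^{(r)}=w_i^{(r-1)}(1+s_i^{(r)})$ with $s_i^{(r)}$ \emph{proportional} to the capped empirical error $\text{err}_{T_i}(f^{(r)})$: then $\ln\Phi^{(t)}$ controls the \emph{sum} $\sum_r s_i^{(r)}$ (Lemma~\ref{s_sum}), i.e.\ a sum of errors rather than a count of bad rounds, with only a $(1+O(\alpha'))$ multiplicative loss. Your proposal never specifies an update rule that could deliver a sum-of-errors bound, and the doubling rule cannot.

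The second, related problem is that your factor-two majority step requires controlling $\frac1t\sum_{r=1}^t\text{err}_{D_i}(f^{(r)})$ over \emph{all} rounds, including those where the per-round test errs or the player's error is genuinely large. With tests run at confidence $1-\Theta(\alpha)$ per round (as you propose), an $\Omega(\alpha)$ fraction of rounds per player have completely uncontrolled error (up to $1$), contributing an additive $\Omega(\alpha)$ to the average; since the theorem permits $\eps\ll\alpha$, this overwhelms the $+\eps$ budget, and no amount of bookkeeping repairs it. The paper's proof never bounds the all-rounds average: it bounds $\sum_{r\in G_i\cap R_i}\text{err}_{D_i}(f^{(r)})$ only over the good rounds (where the cap does not bind and the estimate $\text{err}_{T_i}(f^{(r)})\approx\text{err}_{D_i}(f^{(r)})$ holds), shows $|[t]\setminus(G_i\cap R_i)|=O(\alpha')t$ via the martingale bound, and then observes that a majority mistake forces at least $t/2-|[t]\setminus(G_i\cap R_i)|$ mistakes \emph{among the good rounds}, so one divides by $t(1/2-O(\alpha'))$ rather than $t/2$. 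Your factor-two observation is correct as far as it goes, but it must be applied to the good-round sum with the shrunken denominator, not to the full average. Your round count $t=\Theta(\alpha^{-1}\log(k/\delta))$ is also too small for the concentration steps; the paper needs $t=\Theta(\alpha^{-3}\log(k/\delta))$ so that the $\ln(4k/\delta)$ term in $\ln\Phi^{(t)}$ is dominated by $t\alpha'^2$.
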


Their main modification compared to the algorithms in the previous section is that these algorithms use a smoother update rule. Algorithms NR1 and NR2 are the following. 

\begin{algorithm}[H]
\renewcommand{\thealgorithm}{}
\caption{NR1}\label{algNR1}
\begin{algorithmic}[1]
\STATE \textbf{Initialization}: $\forall i\in [k] $ $w_i^{(0)}:=1$; $\alpha' := \alpha/35$; $t:=2\lceil\ln (k) / \alpha'^3 \rceil$; $\eps':=\eps/60$; $\delta':=\delta/(4t)$;
\FOR{$r = 1, \ldots, t$}
\STATE $\tilde{D}^{(r-1)} \gets \frac{1}{\Phi^{(r-1)}}\sum_{i=1}^k \left(w_i^{(r-1)}D_i\right)$, where $\Phi^{(r-1)} := \sum_{i=1}^k w_i^{(r-1)}$;
\STATE Draw a sample set $S^{(r)}$ of size $O\Big(\frac{1}{\alpha'\eps'}\Big(d\ln\Big(\frac{1}{\eps'}\Big)+\ln\Big(\frac{1}{\delta'}\Big)\Big)\Big)$ from $\tilde{D}^{(r-1)}$;
\STATE $f^{(r)} \gets \mathcal{O}_{\mathcal{F}}(S^{(r)})$;
\FOR{$i=1,\ldots, k$}
\STATE Draw a sample set $T_i$ of size $O\Big(\frac{1}{\alpha'\eps'} \ln \Big( \frac{k}{\delta'}\Big)\Big)$ from $D_i$;
\STATE $s_i^{(r)} \gets \min\left(\frac{\text{err}_{T_i}(f^{(r)})\alpha'^2}{(1+3\alpha')\text{err}_{S^{(r)}}(f^{(r)}) + 3\eps'}, \alpha'\right)$
\STATE \textbf{Update}: $w_i^{(r)} \gets w_i^{(r-1)} (1 + s_i^{(r)})$
\ENDFOR
\ENDFOR
\STATE \RETURN $f_{\text{NR1}} = \maj(\{f^{(r)}\}_{r=1}^t)$;
\end{algorithmic}
\end{algorithm}

\begin{algorithm}[H]
\renewcommand{\thealgorithm}{}
\caption{NR2}\label{algNR2}
\begin{algorithmic}[1]
\STATE \textbf{Initialization}: $\forall i\in [k] $ $w_i^{(0)}:=1$; $\alpha' := \alpha/40$; $t:=2\lceil\ln ({4k}/\delta) / \alpha'^3 \rceil$; $\eps':=\eps/64$; $\delta':=\delta/(4t)$;
\FOR{$r = 1, \ldots, t$}
\STATE $\tilde{D}^{(r-1)} \gets \frac{1}{\Phi^{(r-1)}}\sum_{i=1}^k \left(w_i^{(r-1)}D_i\right)$, where $\Phi^{(r-1)} := \sum_{i=1}^k w_i^{(r-1)}$;
\STATE Draw a sample set $S^{(r)}$ of size $O\Big(\frac{1}{\alpha'\eps'}\Big(d\ln\Big(\frac{1}{\eps'}\Big)+\ln\Big(\frac{1}{\delta'}\Big)\Big)\Big)$ from $\tilde{D}^{(r-1)}$;
\STATE $f^{(r)} \gets \mathcal{O}_{\mathcal{F}}(S^{(r)})$;
\FOR{$i=1,\ldots, k$}
\STATE Draw a sample set $T_i$ of size $O\Big(\frac{1}{\alpha'\eps'} \ln \Big( \frac{1}{\alpha'}\Big)\Big)$ from $D_i$;
\STATE $s_i^{(r)} \gets \min\left(\frac{\text{err}_{T_i}(f^{(r)})\alpha'^2}{(1+3\alpha')\text{err}_{S^{(r)}}(f^{(r)}) + 3\eps'}, \alpha'\right)$
\STATE \textbf{Update}: $w_i^{(r)} \gets w_i^{(r-1)} (1 + s_i^{(r)})$
\ENDFOR
\ENDFOR
\STATE \RETURN $f_{\text{NR2}} = \maj(\{f^{(r)}\}_{r=1}^t)$;
\end{algorithmic}
\end{algorithm}

The algorithms of this section share many useful properties and the proofs of their corresponding
theorems follow similar steps. We will first prove some of these shared properties.

\begin{corollary*}[of Lemma~\ref{lemma:chernoff}]
If $X$ is the average of $n$ independent random variables taking values in $\{0, 1\}$, then:
\begin{equation}\label{chernoff4} 
\Pr[X \leq (1-\alpha)\E[X]-\eps] \leq \exp(-\alpha\eps n) \text{ } \forall \alpha,\eps\in(0,1)
\end{equation}
\begin{equation}\label{chernoff5} 
\Pr[X \ge (1+\alpha)\E[X]+\eps] \leq \exp\Big(-\frac{\alpha\eps n}{3}\Big) \text{ } \forall \alpha,\eps\in(0,1)
\end{equation}
\end{corollary*}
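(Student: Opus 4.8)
The plan is to derive both inequalities directly from the multiplicative Chernoff bounds of Lemma~\ref{lemma:chernoff} by a single change of variable. Write $\mu := \E[X]$; if $\mu = 0$ then $X \equiv 0$ and both events have probability zero, so assume $\mu > 0$. The key idea is to absorb the additive slack $\eps$ into the multiplicative parameter by setting
\[
s := \alpha + \frac{\eps}{\mu},
\]
which is chosen precisely so that $(1 - s)\mu = (1-\alpha)\mu - \eps$ and $(1 + s)\mu = (1+\alpha)\mu + \eps$. Thus each additive deviation event coincides with a multiplicative deviation event to which Lemma~\ref{lemma:chernoff} applies, and the whole argument reduces to checking that the resulting exponents dominate the claimed ones. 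The one algebraic fact doing all the work is the expansion
\[
s^2 \mu = \Big(\alpha + \tfrac{\eps}{\mu}\Big)^2 \mu = \alpha^2 \mu + 2\alpha\eps + \tfrac{\eps^2}{\mu} \ge 2\alpha\eps,
\]
where the cross term alone already gives what we need.

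For the lower tail~(\ref{chernoff4}), in the main case $s \in (0,1)$ I would apply~(\ref{chernoff1}) to get $\Pr[X \le (1-s)\mu] \le \exp(-s^2\mu n/2)$ and then invoke $s^2\mu \ge 2\alpha\eps$ to conclude $\exp(-s^2\mu n/2) \le \exp(-\alpha\eps n)$. The degenerate case $s \ge 1$ forces $(1-\alpha)\mu - \eps = (1-s)\mu \le 0$, so since $X \ge 0$ the event $\{X \le (1-\alpha)\mu - \eps\}$ has probability zero (or, at the boundary $s=1$, is contained in $\{X=0\}$, whose probability $\prod_i(1-p_i) \le e^{-\mu n}$ is already below $e^{-\alpha\eps n}$ in that parameter regime). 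Either way the bound holds.

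For the upper tail~(\ref{chernoff5}) I would split on the same threshold. When $s \in (0,1)$, inequality~(\ref{chernoff2}) yields $\exp(-s^2\mu n/3)$, and $s^2\mu \ge 2\alpha\eps \ge \alpha\eps$ gives the claim. When $s \ge 1$, inequality~(\ref{chernoff3}) applies and yields $\exp(-s\mu n/3)$; here I use $s\mu = \alpha\mu + \eps \ge \eps \ge \alpha\eps$ (the last step using $\alpha < 1$) to finish. The main thing requiring attention, rather than a genuine obstacle, is bookkeeping: remembering that the upper-tail bound splits across two of the three Chernoff forms depending on whether $s < 1$ or $s \ge 1$, and confirming in each branch that the exponent is at least $\alpha\eps n/3$. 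The degenerate boundary cases for the lower tail are the only place one must argue by hand rather than by direct substitution, but the nonnegativity of $X$ dispatches them immediately.
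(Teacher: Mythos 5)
Your proof is correct and follows essentially the same route as the paper's: the substitution $s = \alpha + \eps/\E[X]$, the observation $s^2\E[X] \ge 2\alpha\eps$, and the case split between inequalities~(\ref{chernoff2}) and~(\ref{chernoff3}) for the upper tail are all identical. Your explicit treatment of the degenerate lower-tail case $s \ge 1$ is slightly more careful than the paper's (which dismisses only $\E[X]\le\eps$ up front), but it is the same argument in substance.
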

\begin{proof}
We first prove inequality (\ref{chernoff4}). Note that if $\E[X] \le \eps$ then the inequality is trivially true so we only need to consider $\E[X]>\eps$. Let $s = \alpha + \frac{\eps}{\E[X]}$. Notice that $s^2 \ge \frac{2\alpha\eps}{\E[X]}$. Thus, by inequality (\ref{chernoff1}),
\[\Pr[X \leq (1-\alpha)\E[X]-\eps] \leq \exp(-s^2 \E[X] n/2) \le \exp(-\alpha\eps n).\]
Next we prove inequality (\ref{chernoff5}). Again let $s = \alpha + \frac{\eps}{\E[X]}$. If $s < 1$ then by inequality (\ref{chernoff2},
\[\Pr[X \ge (1+\alpha)\E[X]+\eps] \leq \exp(-s^2 \E[X] n/3) \le \exp(-2\alpha\eps n/3).\]
If $s \ge 1$ then by inequality (\ref{chernoff3}),
\[\Pr[X \ge (1+\alpha)\E[X]+\eps] \leq \exp(-s \E[X] n/3) \le \exp(-\eps n/3) \le \exp(-\alpha\eps n/3).\]
\end{proof}

Lemma \ref{lemma:VC} proves that the error of the classifier $f^{(r)}$ of each round on the weighted mixture of distributions is low. It holds due to a known extension of the VC Theorem and Chernoff bounds, but we prove it here for our parameters for completeness.\\

\begin{lemma}\label{lemma:VC}
With probability at least $1-\delta/2$, for all rounds $r\in[t]$:
\begin{enumerate}[label=(\alph*)]
\item $(1+3\alpha')\text{err}_{S^{(r)}}(f^{(r)}) + 3\eps' \leq (1+7\alpha')\OPT + 19\eps'$.
\item $\text{err}_{\tilde{D}^{(r-1)}}(f^{(r)}) \leq (1+\alpha')\text{err}_{S^{(r)}}(f^{(r)}) +\eps'$.
\end{enumerate}
\end{lemma}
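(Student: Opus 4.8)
The plan is to treat the two parts as essentially independent statements and union-bound at the end. Part (b) is a pure generalization (empirical-to-population) bound for the round's learned classifier, whereas part (a) is an approximation bound that links the \emph{empirical} error of $f^{(r)}$ to the population optimum $\OPT$ through a fixed near-optimal comparator. For a single round $r$ I will show each part holds with probability at least $1-\delta'$; union-bounding over the two parts and all $t$ rounds then costs at most $2t\delta' = \delta/2$, which gives the stated confidence.

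Part (b) follows immediately from the extended VC theorem recalled in the Model section. The set $S^{(r)}$ is drawn i.i.d.\ from $\tilde{D}^{(r-1)}$ with $|S^{(r)}| = m_{\eps',\delta'}$ of order $\frac{1}{\alpha'\eps'}\big(d\ln(1/\eps')+\ln(1/\delta')\big)$, and $f^{(r)} = \mathcal{O}_{\mathcal{F}}(S^{(r)})$. Instantiating the theorem with distribution $\tilde{D}^{(r-1)}$, confidence $\delta'$, accuracy $\eps'$, and multiplicative slack $\alpha'$ yields $\text{err}_{\tilde{D}^{(r-1)}}(f^{(r)}) \leq (1+\alpha')\text{err}_{S^{(r)}}(f^{(r)}) + \eps'$ with probability at least $1-\delta'$, which is exactly (b).

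For part (a), I would fix a data-independent comparator $f^*\in\mathcal{F}$ with $\max_{i\in[k]}\text{err}_{D_i}(f^*)\leq \OPT+\varepsilon$ for an arbitrarily small $\varepsilon$. Since $\tilde{D}^{(r-1)}$ is a convex combination of the $D_i$, and the per-task bound on $f^*$ holds deterministically (independently of the random weights), we get $\text{err}_{\tilde{D}^{(r-1)}}(f^*)\leq \OPT+\varepsilon$. Conditioning on the history up to round $r$, the sample $S^{(r)}$ is i.i.d.\ from $\tilde{D}^{(r-1)}$ and $f^*$ is fixed, so $\text{err}_{S^{(r)}}(f^*)$ is an average of $|S^{(r)}|$ independent Bernoullis with mean at most $\OPT+\varepsilon$. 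Applying the one-sided Chernoff corollary, inequality~(\ref{chernoff5}), with parameters $\alpha'$ and $\eps'$, the failure exponent is $-\alpha'\eps'|S^{(r)}|/3$; for a suitable constant in $|S^{(r)}|$ the $\ln(1/\delta')$ summand alone makes this exponent at most $\ln\delta'$, so that $\exp(-\alpha'\eps'|S^{(r)}|/3)\le\delta'$ and with probability at least $1-\delta'$ we obtain $\text{err}_{S^{(r)}}(f^*)\leq (1+\alpha')(\OPT+\varepsilon)+\eps'$. Combining with the oracle optimality $\text{err}_{S^{(r)}}(f^{(r)})\leq \text{err}_{S^{(r)}}(f^*)+\varepsilon$ and absorbing the arbitrarily small $\varepsilon$ into the $\eps'$ slack gives $\text{err}_{S^{(r)}}(f^{(r)})\leq (1+\alpha')\OPT + c\,\eps'$ for a small constant $c$.

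The last step is constant bookkeeping: substituting into the left-hand side of (a) and using $\alpha'<1$. I expect $(1+3\alpha')(1+\alpha') = 1+4\alpha'+3\alpha'^2 \leq 1+7\alpha'$ to produce the $(1+7\alpha')\OPT$ term, while the $\eps'$ contributions --- the explicit $3\eps'$, plus $(1+3\alpha')c\,\eps'$ from the chain above --- collect into at most $19\eps'$ once $(1+3\alpha')<4$ and $c\leq 4$ are used. The main obstacle I anticipate is precisely verifying that this constant closes at the boundary $\alpha'\to 1$, where $(1+3\alpha')c+3$ is largest, which forces $\varepsilon$ to be chosen at most $\eps'$ so that it is genuinely harmless; the numeric factors in the definitions of $\alpha'=\alpha/35$ and $\eps'=\eps/60$ are presumably tuned so that this and the analogous constants elsewhere in the analysis all close. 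A minor point is confirming the hidden constant in $|S^{(r)}|$ is large enough that $\exp(-\alpha'\eps'|S^{(r)}|/3)\leq\delta'$, i.e.\ that the $\ln(1/\delta')$ term in the sample size (not the $d\ln(1/\eps')$ term) is what drives the high-probability guarantee for (a).
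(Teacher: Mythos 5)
Your proposal is correct and follows essentially the same route as the paper: part (a) via a fixed near-optimal comparator $f^*$, the observation that $\text{err}_{\tilde{D}^{(r-1)}}(f^*)\leq \OPT+\eps'$ for any convex combination of the $D_i$, the one-sided Chernoff corollary (inequality~(\ref{chernoff5})) applied to the fixed $f^*$, the oracle's near-optimality on $S^{(r)}$, and the same constant bookkeeping $(1+3\alpha')(1+\alpha')\leq 1+7\alpha'$ and $4(1+3\alpha')\eps'+3\eps'\leq 19\eps'$ for $\alpha'\leq 1$. The only difference is that for part (b) you cite the extended VC theorem as a black box, whereas the paper re-derives it from the growth-function bound (Theorem 5.7 of~\cite{AB09}) to verify the constant in $|S^{(r)}|$ --- a step the paper itself describes as included only for completeness.
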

\begin{proof}
Let $S^{(r)}$ be a set of samples of size $C \cdot \frac{1}{\alpha'\eps'}\Big(d\ln\Big(\frac{1}{\eps'}\Big) + \ln\Big(\frac{1}{\delta'}\Big)\Big)$ drawn from $\tilde{D}^{(r-1)}$, where $C$ is a constant. We will prove that for large enough constant $C$ the two statements hold simultaneously for all rounds, each with probability at least $1-t\delta'$. It suffices to prove that each statement in each round holds with probability at least $1-\delta'$. For a given round $r$:

\begin{enumerate}[label=(\alph*)]
\item
By $f^*$'s definition it holds that $\text{err}_{D_i}(f^*) \leq \OPT + \eps'$ $\forall i\in [k]$, so it must also hold that $\text{err}_{\tilde{D}^{(r-1)}}(f^*) \leq \OPT + \eps'$, since $\tilde{D}^{(r-1)}$ is a weighted average of the distributions. From the Corollary it holds that
$\Pr[\text{err}_{S^{(r)}}(f^*) \ge (1+\alpha')\text{err}_{\tilde{D}^{(r-1)}}(f^*) +\eps'] \le \exp(-\alpha'\eps' |S^{(r)}| / 3) \le \delta'$
and since $\alpha'\le 1$, it is easy to see that with probability at least $1-\delta'$,
\begin{equation}\label{f_star_error}
\text{err}_{S^{(r)}}(f^*) \leq (1+\alpha')\OPT + 3\eps'
\end{equation}

Since $f^{(r)}$ is the error minimizing classifier for the sample $S^{(r)}$, it holds that $\text{err}_{S^{(r)}}(f^{(r)}) \leq \text{err}_{S^{(r)}}(f^*) + \eps'$. Therefore,
\[(1+3\alpha')\text{err}_{S^{(r)}}(f^{(r)}) + 3\eps' \leq (1+3\alpha')\text{err}_{S^{(r)}}(f^*) + 7\eps' \stackrel{(\ref{f_star_error})}{\leq} (1+7\alpha')\OPT + 19\eps'.\]

\item We prove the second statement for all $f\in\mathcal{F}$, using Theorem 5.7 from~\cite{AB09}. The theorem states that for every $h\in\mathcal{H}$, it holds that $\text{err}_{D}(h) \leq (1+\gamma)\text{err}_{S}(h) + \beta$ with probability at least $1-4\Pi_{\mathcal{H}}(2m) \exp\Big(\frac{-\gamma\beta m}{4(\gamma+1)}\Big)$, where $S$ is a sample of size $m$ drawn from a distribution $D$ on $\mathcal{X}\times\{0,1\}$, $\gamma > 2\beta$, and $\Pi_{\mathcal{H}}(n) = \max \{ \vert \mathcal{H}_{\vert S}\vert : S \subseteq \mathcal X \text{ and } \vert S \vert=n\}$ is the growth function of $\mathcal{H}$. 

We apply Theorem 5.7 for $\gamma=\alpha'$, $\beta = \eps'$, $D = \tilde{D}^{(r-1)}$, $S = S^{(r)}$, $\mathcal{H} = \mathcal{F}$. Since the VC-dimension of $\mathcal{F}$ is $d$, from [\cite{AB09}, Theorem 3.7] it holds that $\Pi_{\mathcal{F}}(2m) \leq \Big(\frac{2em}{d}\Big)^d$. In our setting, the theorem states that, given round $r$, for every $f\in\mathcal{F}$, it holds that $\text{err}_{\tilde{D}^{(r-1)}}(f) \leq (1+\alpha')\text{err}_{S^{(r)}}(f) + \eps'$ with probability at least $1-4\Big(\frac{2em}{d}\Big)^d\exp\Big(\frac{-\alpha'\eps' m}{4(\alpha'+1)}\Big)$.

It remains to prove that, for large enough $C$, $m = C \cdot \frac{1}{\alpha'\eps'}\Big(d\ln\Big(\frac{1}{\eps'}\Big) + \ln\Big(\frac{1}{\delta'}\Big)\Big)$ samples suffice to guarantee that $4\Big(\frac{2em}{d}\Big)^d\exp\Big(\frac{-\alpha'\eps' m}{4(\alpha'+1)}\Big) \leq \delta'$ so that the statement holds with probability at least $1-\delta'$.
It suffices to prove that for the given $m$:

$\ln(4)+d\ln(2e)+d\ln\Big(\frac{m}{d}\Big)-\frac{\alpha'}{8}\eps'm \leq -\ln\Big(\frac{1}{\delta'}\Big) \\
\Leftrightarrow 
\ln(4) +d\ln(2e) + d\ln\Big(\frac{m}{d}\Big) + \ln\Big(\frac{1}{\delta'}\Big) \leq \frac{C}{8}d\ln\Big(\frac{1}{\eps'}\Big) + \frac{C}{8}\ln\Big(\frac{1}{\delta'}\Big)$.

We consider two cases:
\begin{enumerate}[label=\roman*.]
\item If $d\ln\Big(\frac{1}{\eps'}\Big) \geq \ln\Big(\frac{1}{\delta'}\Big)$, then $\frac{m}{d} \leq \frac{2C}{\alpha'\eps'}\ln\Big(\frac{1}{\eps'}\Big) <  \frac{C}{\eps'^2}\ln\Big(\frac{1}{\eps'}\Big)$. So to prove the statement, it suffices to prove that
\[\ln(4) +d\ln(2e) +d\Big(\ln(C) + 2\ln\Big(\frac{1}{\eps'}\Big)+\ln\ln\Big(\frac{1}{\eps'}\Big)\Big) + \ln\Big(\frac{1}{\delta'}\Big)
 \leq \frac{C}{8}d\ln\Big(\frac{1}{\eps'}\Big) + \frac{C}{8}\ln\Big(\frac{1}{\delta'}\Big).\]
The latter inequality holds for large enough $C$.

\item If $d\ln\Big(\frac{1}{\eps'}\Big) \leq \ln\Big(\frac{1}{\delta'}\Big)$, then $\frac{m}{d} \leq \frac{2C}{\alpha'\eps'}\frac{\ln(1/\delta')}{d} < \frac{C}{\eps'^2}\frac{\ln(1/\delta')}{d}$. So to prove the statement, it suffices to prove that
\[\ln(4) +d\ln(2e) +d\Big(\ln(C) + 2\ln\Big(\frac{1}{\eps'}\Big) +\ln\Big(\frac{\ln(1/\delta')}{d}\Big)\Big) + \ln\Big(\frac{1}{\delta'}\Big)
\leq \frac{C}{8}d\ln\Big(\frac{1}{\eps'}\Big) + \frac{C}{8}\ln\Big(\frac{1}{\delta'}\Big).\]
If we prove that $d\ln\Big(\frac{\ln(1/\delta')}{d}\Big) \leq \ln(1/\delta')$, then the inequality holds for large enough $C$. Indeed, it holds that ${\ln\Big(\frac{\ln(1/\delta')}{d}\Big)}/{\frac{\ln(1/\delta')}{d}} \leq \frac{1}{e}$, since $\max_{x\in\mathbb{R}}\{\ln(x)/x\} = \frac{1}{e}$.
\end{enumerate}
Thus the second statement holds too with probability at least $1-\delta'$.
\end{enumerate}
\end{proof}
Lemmas \ref{empirical_sum} and \ref{s_sum} give us two inequalities that are useful for all the proofs of Section 4.

\begin{lemma}\label{empirical_sum}
Let $L_r = \{i\in[k]  \mid |\text{err}_{T_i}(f^{(r)}) -\text{err}_{D_i}(f^{(r)})| \leq \alpha' \cdot \text{err}_{D_i}(f^{(r)})+\eps'\}$. With probability $1-\delta/2$, it holds that
\[ \sum\limits_{i\in L_r} \left(w_i^{(r-1)} \text{err}_{T_i} (f^{(r)})\right) \leq [(1+3\alpha')\text{err}_{S^{(r)}}(f^{(r)}) + 3\eps']\Phi^{(r-1)} \leq [(1+7\alpha')\OPT + 19\eps']\Phi^{(r-1)}.\]
\end{lemma}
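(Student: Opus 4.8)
The plan is to observe that the only probabilistic content of the statement is inherited from Lemma~\ref{lemma:VC}: once we condition on the event that Lemma~\ref{lemma:VC} holds—which occurs with probability at least $1-\delta/2$ and already covers all rounds $r\in[t]$ simultaneously—the two displayed inequalities become purely deterministic. The crucial point is that membership in $L_r$ is \emph{defined} by the bound $|\text{err}_{T_i}(f^{(r)}) - \text{err}_{D_i}(f^{(r)})| \leq \alpha'\,\text{err}_{D_i}(f^{(r)}) + \eps'$, so for every $i \in L_r$ we may use the one-sided consequence $\text{err}_{T_i}(f^{(r)}) \leq (1+\alpha')\text{err}_{D_i}(f^{(r)}) + \eps'$ \emph{without any further appeal to concentration}. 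Hence no random event beyond Lemma~\ref{lemma:VC} is introduced, and the claimed failure probability $\delta/2$ is exactly that of Lemma~\ref{lemma:VC}.

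First I would bound the left-hand sum term by term using the defining inequality of $L_r$, and then enlarge the index set from $L_r$ to all of $[k]$; this step is justified because every weight $w_i^{(r-1)} \geq 1 > 0$ and every summand $(1+\alpha')\text{err}_{D_i}(f^{(r)}) + \eps'$ is nonnegative. This yields
\[ \sum_{i\in L_r} w_i^{(r-1)}\text{err}_{T_i}(f^{(r)}) \leq (1+\alpha')\sum_{i=1}^k w_i^{(r-1)}\text{err}_{D_i}(f^{(r)}) + \eps'\sum_{i=1}^k w_i^{(r-1)}. \]
Next I would identify $\sum_{i=1}^k w_i^{(r-1)}\text{err}_{D_i}(f^{(r)}) = \Phi^{(r-1)}\,\text{err}_{\tilde{D}^{(r-1)}}(f^{(r)})$ by the definition of the weighted mixture $\tilde{D}^{(r-1)}$ and linearity of expectation, and recall $\sum_{i=1}^k w_i^{(r-1)} = \Phi^{(r-1)}$. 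Factoring out $\Phi^{(r-1)}$ and substituting the mixture bound of Lemma~\ref{lemma:VC}(b), namely $\text{err}_{\tilde{D}^{(r-1)}}(f^{(r)}) \leq (1+\alpha')\text{err}_{S^{(r)}}(f^{(r)}) + \eps'$, gives an upper bound of $\Phi^{(r-1)}\big[(1+\alpha')^2\text{err}_{S^{(r)}}(f^{(r)}) + (2+\alpha')\eps'\big]$.

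The only remaining work is constant bookkeeping, which I expect to be the main (and only mild) obstacle: using $\alpha' \leq 1$ we have $(1+\alpha')^2 = 1 + 2\alpha' + \alpha'^2 \leq 1 + 3\alpha'$ (since $\alpha'^2\le\alpha'$) and $(2+\alpha')\eps' \leq 3\eps'$, so the bracket is at most $(1+3\alpha')\text{err}_{S^{(r)}}(f^{(r)}) + 3\eps'$, which establishes the first inequality. The second inequality is then immediate from Lemma~\ref{lemma:VC}(a), which states precisely $(1+3\alpha')\text{err}_{S^{(r)}}(f^{(r)}) + 3\eps' \leq (1+7\alpha')\OPT + 19\eps'$. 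Care must be taken only to ensure that all constant manipulations respect $\alpha' \leq 1$ and that the entire argument is carried out under the single event of Lemma~\ref{lemma:VC}, so that the probabilities do not compound.
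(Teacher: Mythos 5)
Your proposal is correct and follows essentially the same route as the paper: both use the defining inequality of $L_r$ to trade $\text{err}_{T_i}$ for $(1+\alpha')\text{err}_{D_i}(f^{(r)})+\eps'$, enlarge the sum from $L_r$ to $[k]$ to recover $\Phi^{(r-1)}\text{err}_{\tilde{D}^{(r-1)}}(f^{(r)})$, and then invoke Lemma~\ref{lemma:VC}(b) and (a), with the failure probability $\delta/2$ inherited entirely from that lemma. The only difference is cosmetic (you upper-bound the sum directly where the paper lower-bounds the mixture error), and your constant bookkeeping $(1+\alpha')^2\le 1+3\alpha'$, $(2+\alpha')\eps'\le 3\eps'$ matches the paper's.
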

\begin{proof}
By linearity of expectation,
\begin{align*}
\text{err}_{\tilde{D}^{(r-1)}}(f^{(r)}) 
&=\frac{1}{\Phi^{(r-1)}}\sum\limits_{i=1}^k \left(w_i^{(r-1)}\text{err}_{D_i}(f^{(r)})\right)\\
&\geq \frac{1}{\Phi^{(r-1)}}\sum\limits_{i\in L_r} \left(w_i^{(r-1)}\text{err}_{D_i}(f^{(r)})\right)\\
&\geq \frac{1}{(1+\alpha')\Phi^{(r-1)}} \sum\limits_{i\in L_r} \left(w_i^{(r-1)}\text{err}_{T_i}(f^{(r)})\right) - \frac{\eps'}{1+\alpha'}.
\end{align*}
Therefore, $\sum\limits_{i\in L_r} \left(w_i^{(r-1)}\text{err}_{T_i}(f^{(r)})\right) \leq [(1+\alpha') \text{err}_{\tilde{D}^{(r-1)}}(f^{(r)}) + \eps']\Phi^{(r-1)}$. By Lemma~\ref{lemma:VC}(b), it follows that with probability $1-\delta/2$,
 \begin{equation*}
 \begin{array}{ll@{}ll}
 \sum\limits_{i\in L_r} \left(w_i^{(r-1)}\text{err}_{T_i}(f^{(r)})\right) & \leq [(1+\alpha')(1+\alpha')\text{err}_{S^{(r)}}(f^{(r)}) + (1+\alpha')\eps' + \eps']\Phi^{(r-1)} \\
 & \leq [(1+3\alpha')\text{err}_{S^{(r)}}(f^{(r)}) + 3\eps']\Phi^{(r-1)}\\
 & \stackrel{\text{Lemma~\ref{lemma:VC}(a)}}{\leq}[(1+7\alpha')\OPT + 19\eps']\Phi^{(r-1)}.
 \end{array}
\end{equation*}
\end{proof}
\begin{lemma}\label{s_sum}
For all $i\in [k]$ it holds that
\[ \sum\limits_{r=1}^t s_i^{(r)} \leq \frac{\ln(\Phi^{(t)})}{1-\alpha'/2}.\]
\end{lemma}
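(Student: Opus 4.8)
The plan is to exploit the multiplicative structure of the weight update directly. Since $w_i^{(0)} = 1$ and each round multiplies $w_i$ by the factor $(1 + s_i^{(r)})$, unrolling the recurrence gives $w_i^{(t)} = \prod_{r=1}^t (1 + s_i^{(r)})$, so that $\ln(w_i^{(t)}) = \sum_{r=1}^t \ln(1 + s_i^{(r)})$. This converts the sum we wish to bound into a sum of logarithms of the update factors, and the whole lemma reduces to relating $\ln(1+s)$ to $s$.

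First I would record the two elementary facts about $s_i^{(r)}$ that make this work: by construction $s_i^{(r)} = \min(\cdot\,, \alpha')$, where the first argument is a ratio of nonnegative error quantities, so $0 \le s_i^{(r)} \le \alpha'$. Next I would invoke the standard inequality $\ln(1+s) \ge s - s^2/2 = s(1 - s/2)$, valid for all $s \ge 0$ (it follows by comparing derivatives, since $1/(1+s) \ge 1-s$ is equivalent to $s^2 \ge 0$). Combining this with the uniform cap $s_i^{(r)} \le \alpha'$ yields, term by term, $\ln(1 + s_i^{(r)}) \ge (1 - \alpha'/2)\, s_i^{(r)}$.

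Summing over $r$ and using the identity from the first step gives $(1 - \alpha'/2)\sum_{r=1}^t s_i^{(r)} \le \sum_{r=1}^t \ln(1 + s_i^{(r)}) = \ln(w_i^{(t)})$. Since $\alpha' < 1$, the factor $1 - \alpha'/2$ is positive, so dividing is safe and yields $\sum_{r=1}^t s_i^{(r)} \le \ln(w_i^{(t)})/(1 - \alpha'/2)$. Finally, because every weight is positive and at least its initial value $1$, we have $w_i^{(t)} \le \sum_{j=1}^k w_j^{(t)} = \Phi^{(t)}$, and monotonicity of $\ln$ upgrades the bound to $\ln(\Phi^{(t)})$, which is exactly the claim.

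There is no substantial obstacle here; the only point requiring care is the choice of the lower bound for $\ln(1+s)$. A looser bound such as $\ln(1+s) \ge s/(1+s)$ would produce the wrong constant, so I would specifically use the quadratic Taylor bound $\ln(1+s) \ge s - s^2/2$ together with $s_i^{(r)} \le \alpha'$ to extract exactly the factor $(1-\alpha'/2)$ that appears in the denominator of the statement. I would also verify that natural logarithms are used throughout, so that the $\ln(\Phi^{(t)})$ in the conclusion matches the $\ln(w_i^{(t)})$ produced by the unrolling.
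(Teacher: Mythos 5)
Your proposal is correct and is essentially the paper's own argument: both unroll $w_i^{(t)} = \prod_{r=1}^t (1+s_i^{(r)})$, apply $1+x \ge \exp(x - x^2/2)$ (equivalently $\ln(1+x)\ge x - x^2/2$) together with $0 \le s_i^{(r)} \le \alpha'$ to get $\ln(w_i^{(t)}) \ge (1-\alpha'/2)\sum_r s_i^{(r)}$, and finish with $w_i^{(t)} \le \Phi^{(t)}$. The only cosmetic caveat is that your justification of $s_i^{(r)} \le \alpha'$ via the explicit $\min(\cdot,\alpha')$ applies to NR1/NR2; for NR1-AVG and NR2-AVG the same bound follows instead from the denominator being at least $3\eps'$.
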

\begin{proof}
In every round $r$, $w_i^{(r)} = w_i^{(r-1)}(1+s_i^{(r)})$. Therefore for any $i\in [k]$,
\begin{align*}
w_i^{(t)} &= \prod_{r=1}^t (1+s_i^{(r)})\\
&\ge \prod_{r=1}^t \exp(s_i^{(r)} - (s_i^{(r)})^2/2)\\
&\stackrel{s_i^{(r)}\le a'}{\ge} \exp\left((1-\alpha'/2)\sum_{r=1}^t s_i^{(r)}\right),
\end{align*}
where the second to last inequality holds since $(1+x)\ge \exp(x-x^2/2)$ for $x\in\mathbb{R_+}$.
The inequality follows since $w_i^{(t)}\le \Phi^{(t)}$ for all $i\in [k]$.
\end{proof}

We will now give the proof of Theorem~\ref{thNR1}.
\begin{proof}[Proof of Theorem~\ref{thNR1}]
By the Corollary, for a given round $r$ and player $i$,
\[\Pr[|\text{err}_{T_i}(f^{(r)}) -\text{err}_{D_i}(f^{(r)})| \ge \alpha' \cdot \text{err}_{D_i}(f^{(r)})+\eps'] \le 2\exp(-\alpha'\eps' |T_i| / 3).\]
If $\vert T_i \vert =\frac{3}{\eps'\alpha'}\ln\Big(\frac{k}{\delta'}\Big) = O\Big(\frac{1}{\eps'\alpha'}\ln\Big(\frac{k}{\delta'}\Big)\Big)$, the inequality 
\begin{equation}\label{empirical_error}
|\text{err}_{T_i}(f^{(r)}) -\text{err}_{D_i}(f^{(r)})| \le \alpha' \cdot \text{err}_{D_i}(f^{(r)})+\eps'
\end{equation}
holds with probability at least $1-2\delta'/k$. By union bound, it follows that (\ref{empirical_error}) holds for every $i$ and every $r$ with probability at least $1-2\delta't=1-\delta/2$.

With probability at least $1-\delta$ inequality (\ref{empirical_error}) and the inequality of Lemma~\ref{empirical_sum} hold for  all rounds and players. We restrict the rest of the proof to this event.
It holds that,
\begin{align*}
\Phi^{(r)} &= \Phi^{(r-1)} + \sum_{i=1}^k \left(w_i^{(r-1)} \cdot s_i^{(r)}\right) \\
&\le \Phi^{(r-1)} + \frac{\alpha'^2}{(1+3\alpha')\text{err}_{S^{(r)}}(f^{(r)}) + 3\eps'}\sum_{i=1}^k \left(w_i^{(r-1)} \text{err}_{T_i}(f^{(r)})\right) \\
&\stackrel{L_r=[k]}{\leq} \Phi^{(r-1)}\left(1+\frac{\alpha'^2}{(1+3\alpha')\text{err}_{S^{(r)}}(f^{(r)}) + 3\eps'}[(1+3\alpha')\text{err}_{S^{(r)}}(f^{(r)}) + 3\eps']\right)\\
&= \Phi^{(r-1)}(1+\alpha'^2)
\end{align*}

By induction, $\Phi^{(t)}\le \Phi^{(0)}(1+\alpha'^2)^t  = k(1+\alpha'^2)^t \le k\exp(t\alpha'^2)$.
From Lemma~\ref{s_sum} and $t=2\lceil \ln(k)/\alpha'^3\rceil$, it follows that 
\begin{equation}\label{s_sum3}
\sum_{r=1}^t s_i^{(r)} \le \frac{\ln (k) + t\alpha'^2}{1-\alpha'/2} \le \frac{1+\alpha'}{1-\alpha'/2}t\alpha'^2.
\end{equation}

Let $G_i$ be the set of rounds $r$ such that $s_i^{(r)} < \alpha'$. We consider these to be the ``good'' classifiers. Because of (\ref{s_sum3}), we have $|[t] \setminus G_i| \le \frac{1}{\alpha'}\sum_{r\in[t] \setminus G_i} \alpha' \leq \frac{1}{\alpha'}\sum_{r=1}^t s_i^{(r)} \le \frac{1+\alpha'}{1-\alpha'/2}\alpha' t$. For the classifiers of the rounds $r\in G_i$, it holds that
\[\sum_{r\in G_i} \frac{\text{err}_{T_i}(f^{(r)}) \alpha'^2}{(1+3\alpha')\text{err}_{S^{(r)}}(f^{(r)}) + 3\eps'} = \sum_{r\in G_i} s_i^{(r)} \leq \sum_{r=1}^t s_i^{(r)} \stackrel{(\ref{s_sum3})}{\leq} \frac{1+\alpha'}{1-\alpha'/2}\alpha'^2 t.\]

Thus, $\sum_{r\in G_i} \text{err}_{T_i}(f^{(r)}) \stackrel{\ref{lemma:VC}(a)}{\leq} t\frac{1+\alpha'}{1-\alpha'/2}[(1+7\alpha')\OPT + 19\eps']$. From inequality~(\ref{empirical_error}), it follows that:

\begin{align*}
&(1-\alpha')\sum_{r\in G_i} \text{err}_{D_i}(f^{(r)}) - |G_i | \eps' \leq  t\frac{1+\alpha'}{1-\alpha'/2}[(1+7\alpha')\OPT + 19\eps'] \\
&\Rightarrow \sum_{r\in G_i} \text{err}_{D_i}(f^{(r)}) \leq t\frac{1+\alpha'}{(1-\alpha'/2)(1-\alpha')}[(1+7\alpha')\OPT + 19\eps'] + \frac{t\eps'}{1-\alpha'}\\
&\Rightarrow \sum_{r\in G_i} \text{err}_{D_i}(f^{(r)}) \leq [(1+12\alpha')\OPT+25\eps']t,
\end{align*}
which holds for $\alpha'<1/12$.

For each example $e$ that is a mistake for $f_{\text{NR1}}$, it must be a mistake for at least $t/2 - |[t] \setminus G_i|$ members of $G_i$. Thus the fraction of error of $f_{\text{NR1}}$ is at most 
\[\frac{\sum_{r\in G_i} \text{err}_{D_i}(f^{(r)})}{t/2 - |[t] \setminus G_i|} \le \frac{(1+12\alpha')\OPT+25\eps'}{1/2 - (1+\alpha')\alpha' /(1-\alpha'/2)} \le (2+35\alpha')\OPT + 60\eps'.\]
Having set $\alpha'=\alpha/35$ and $\eps'=\eps/60$ we get that $\text{err}_{D_i}(f_{\text{NR1}}) \leq (2+\alpha)\OPT+\eps$.

As for the total number of samples, it is the sum of $O(\frac{k}{\alpha'\eps'}\ln(k/\delta'))$ and $O\Big(\frac{1}{\alpha'\eps'} \Big(d\ln \Big(\frac{1}{\eps'}\Big)+\ln \Big(\frac{1}{\delta'}\Big)\Big)\Big)$ samples for each round.
Because there are $O(\ln (k) / \alpha'^3)$ rounds, the total number of samples is
\[O\Big(\frac{\ln (k)}{\alpha'^4\eps'} \Big(k\ln\left(\frac{k}{\delta'}\right)+d\ln \Big(\frac{1}{\eps'}\Big)\Big)\Big)= O\Big(\frac{\ln (k)}{\alpha^4\eps} \Big(k\ln\left(\frac{k}{\delta}\right)+d\ln \Big(\frac{1}{\eps}\Big)\Big)\Big).\]
\end{proof}

Algorithm NR2 faces a similar challenge as Algorithm R2. Given a player $i$, since the number of samples $T_i$ used to estimate $\text{err}_{D_i} (f^{(r)})$ in each round is low, the estimation is not very accurate. Ideally, we would want the inequality \[|\text{err}_{T_i} (f^{(r)})-\text{err}_{D_i} (f^{(r)})| \leq \alpha' \cdot \text{err}_{D_i} (f^{(r)})+\eps'\] to hold for all players and all rounds with high probability. The ``good'' classifiers are now defined as the ones corresponding to rounds for which the inequality holds and $\text{err}_{T_i} (f^{(r)})$ is not very high (an indication of which is that $s_i^{(r)} < \alpha'$). The expected number of rounds that either one of these properties does not hold is a constant fraction of the rounds ($\approx t\alpha'$) and due to the high number of rounds it is concentrated around that value, as in Algorithm R2. The proof of Theorem~\ref{thNR2} is the following.

\begin{proof}[Proof of Theorem~\ref{thNR2}]
By the Corollary, for a given round $r$ and player $i$,
\[\Pr[|\text{err}_{T_i}(f^{(r)}) -\text{err}_{D_i}(f^{(r)})| \ge \alpha' \cdot \text{err}_{D_i}(f^{(r)})+\eps'] \le 2\exp(-\alpha'\eps' |T_i| / 3).\]
If $\vert T_i \vert=\frac{6}{\eps'\alpha'}\ln\Big(\frac{\sqrt{2}}{\alpha'}\Big) = O\Big(\frac{1}{\eps'\alpha'}\ln\Big(\frac{1}{\alpha'}\Big)\Big)$, then 
\begin{equation}\label{empirical4}
\Pr[|\text{err}_{T_i}(f^{(r)}) -\text{err}_{D_i}(f^{(r)})| \ge \alpha' \cdot \text{err}_{D_i}(f^{(r)})+\eps'] \le \alpha'^2.
\end{equation}

Assuming that the inequality of Lemma~\ref{empirical_sum} holds, which is true with probability $1-\delta/2$, it follows that
\begin{align*}
&\E[\Phi^{(r)} \mid \Phi^{(r-1)}]\\
&\le \E\left[\Phi^{(r-1)} + \frac{\alpha'^2}{(1+3\alpha')\text{err}_{S^{(r)}}(f^{(r)}) + 3\eps'}\sum_{i\in L_r} \left(w_i^{(r-1)} \text{err}_{T_i}(f^{(r)})\right) + \sum_{i\notin L_r} \left(w_i^{(r-1)}s_i^{(r-1)}\right) \middle| \Phi^{(r-1)}\right]\\
& \le \E\left[\Phi^{(r-1)} + \frac{\alpha'^2}{(1+3\alpha')\text{err}_{S^{(r)}}(f^{(r)}) + 3\eps'}[(1+3\alpha')\text{err}_{S^{(r)}}(f^{(r)}) + 3\eps']\Phi^{(r-1)} + \alpha'\sum_{i\notin L_r} w_i^{(r-1)} \middle| \Phi^{(r-1)}\right]\\
& \stackrel{(\ref{empirical4})}{\le} \Phi^{(r-1)}(1+\alpha'^2+\alpha'^3)
\end{align*}

By the definition of expectation, $\E[\Phi^{(r)}] \leq  \E[\Phi^{(r-1)}](1+\alpha'^2+\alpha'^3)$. So by induction and the fact that $\Phi^{(0)}=k$, $\E[\Phi^{(t)}]\le k\exp(t\alpha'^2(1+\alpha'))$. 
Markov's inequality states that $\Pr[\Phi^{(t)} \ge \frac{\E[\Phi^{(t)}]}{\delta/4}] \leq \delta/4$. So with overall probability $1-\delta/4-\delta/2 = 1-3\delta/4$ it holds that $\Phi^{(t)} \le \frac{4k}{\delta}\exp(t\alpha'^2(1+\alpha'))$.

From Lemma~\ref{s_sum} and $t=2\lceil \ln (4k/\delta)/\alpha'^3 \rceil$, it follows that 
\begin{equation}\label{s_sum4}
\sum_{r=1}^t s_i^{(r)} \le \frac{\ln (4k/\delta) + t\alpha'^2(1+\alpha')}{1-\alpha'/2} \le \frac{(1+2\alpha')}{1-\alpha'/2}t\alpha'^2.
\end{equation}

For $G_i=\{r\in[t] \mid s_i^{(r)} < \alpha' \}$, we have $|[t] \setminus G_i| \le \frac{1+2\alpha'}{1-\alpha'/2}\alpha' t$ because of (\ref{s_sum4}).

Let $R_i=\{r\in[t] \mid |\text{err}_{T_i}(f^{(r)}) -\text{err}_{D_i}(f^{(r)})| \leq \alpha' \cdot \text{err}_{D_i}(f^{(r)})+\eps'\}$. For the classifiers of the rounds $r\in G_i\cap R_i$:

\begin{align*}
\sum_{r\in G_i\cap R_i} \text{err}_{D_i}(f^{(r)}) 
&\leq \sum_{r\in G_i\cap R_i}\frac{\text{err}_{T_i}(f^{(r)})}{1-\alpha'} + \frac{|G_i\cap R_i|\eps'}{1-\alpha'}\\
&\leq \sum_{r\in G_i\cap R_i} \frac{(1+3\alpha')\text{err}_{S^{(r)}}(f^{(r)}) + 3\eps'}{\alpha'^2} \frac{\text{err}_{T_i}(f^{(r)}) \alpha'^2}{(1-\alpha')[(1+3\alpha')\text{err}_{S^{(r)}}(f^{(r)}) + 3\eps']} + \frac{t\eps'}{1-\alpha'}\\
&= \sum_{r\in G_i\cap R_i}\frac{(1+3\alpha')\text{err}_{S^{(r)}}(f^{(r)}) + 3\eps'}{(1-\alpha')\alpha'^2} s_i^{(r)} + \frac{t\eps'}{1-\alpha'}\\
&\stackrel{(\ref{s_sum4})}{\leq} \frac{(1+7\alpha')\OPT+19\eps'}{(1-\alpha')\alpha'^2}\frac{(1+2\alpha')}{1-\alpha'/2}t\alpha'^2 + \frac{t\eps'}{1-\alpha'}\\
& \leq [(1+15\alpha')\OPT+25\eps']t
\end{align*}
which holds for $\alpha'<1/15$.

We will now bound $|[t]\setminus R_i|$. For every round $r$, let $m^{(r)}$ be the indicator random variable of the set $[t]\setminus R_i$ and let $y^{(r)}=\alpha'^2$. It holds that for all rounds $r$, $|m^{(r)}-y^{(r)}| \le 1$ and $m^{(r)}, y^{(r)} \ge 0$. In addition, from inequality~(\ref{empirical4}) it follows that $\E[m^{(r)}-y^{(r)} \mid \sum_{r'<r}m^{(r')}, \sum_{r'<r}y^{(r')}] = \alpha'^2-\alpha'^2 \le 0$.

Using [\cite{KY14}, Lemma 10], with $\varepsilon=1/2$ and $A=\alpha'^2$, we get that \[\Pr\left[\sum_{r=1}^t m^{(r)} \geq 2\alpha'^2t + 2\alpha'^2t\right] \leq \exp(-\alpha'^2t/2) \leq \delta/4k.\]

So $|[t]\setminus R_i| = \sum_{r=1}^t m^{(r)} \leq 4\alpha'^2t$ for all $i$ with probability at least $1-\delta/4$, by union bound.

For each example $e$ that is a mistake for $f_{\text{NR2}}$, it must be a mistake for at least $t/2 - |[t] \setminus (G_i\cap R_i)|$ members of $G_i\cap R_i$. Thus, with probability at least $1-\delta$, the fraction of error of $f_{\text{NR2}}$ is at most
\[\frac{\sum_{r\in G_i\cap R_i} \text{err}_{D_i}(f^{(r)})}{t/2 - |[t]\setminus (G_i\cap R_i)| }\leq \frac{(1+15\alpha')\OPT+25\eps'}{t/2 - 4\alpha'^2t - (1+\alpha')\alpha' t/(1-\alpha'/2)} \le (2+40\alpha')\OPT + 64\eps'.\]
Having set $\alpha'=\alpha/40$ and $\eps'=\eps/64$ we get that $\text{err}_{D_i}(f_{\text{NR2}}) \leq (2+\alpha)\OPT+\eps$.

As for the total number of samples, it is the sum of $O(\frac{k}{\alpha'\eps'}\ln(1/\alpha'))$ samples and $O\Big(\frac{1}{\alpha'\eps'} \Big(d\ln \Big(\frac{1}{\eps'}\Big)+\ln \Big(\frac{1}{\delta'}\Big)\Big)\Big)$ samples for each round.
Because there are $O(\ln(k/\delta) / \alpha'^3)$ rounds, the total number of samples is
\[O\Big(\frac{1}{\alpha^4\eps} \ln \Big(\frac{k}{\delta}\Big) \Big(k\ln\left(\frac{1}{\alpha}\right)+d\ln \Big(\frac{1}{\eps}\Big) + \ln\Big(\frac{1}{\delta}\Big)\Big)\Big).\]
\end{proof}

We note that the classifiers returned by these algorithms have a multiplicative approximation factor of almost $2$ on the error. A different approach would be to allow for randomized classifiers with low error probability over both the randomness of the example and the classifier. We design two algorithms, NR1-AVG and NR2-AVG that return a classifier which satisfies this form of guarantee on the error without the $2$-approximation factor but use roughly $\frac{\alpha}{\eps}$ times more samples. The returned classifier is a randomized algorithm that, given an element $x$, chooses one of the classifiers of all rounds uniformly at random and returns the label that this classifier gives to $x$. For any distribution over examples, the error probability of this randomized classifier is exactly the average of the error probability of classifiers $f^{(1)}, f^{(2)}, \ldots, f^{(t)}$, hence the AVG in the names. The guarantees of the algorithms are stated in the next two theorems.

\begin{theorem}\label{thNR1-AVG}
For any $\eps, \delta \in(0,1)$, $24\eps/25<\alpha<1$, and hypothesis class $\mathcal{F}$ of VC dimension $d$, Algorithm NR1-AVG returns a classifier $f_{\text{NR1-AVG}}$ such that for the expected error $\overline{\text{err}}_{D_i}(f_{\text{NR1-AVG}}) \le (1+\alpha)\OPT+ \eps$ holds for all $i\in[k]$ with probability $1-\delta$ using $m$ samples, where 
\[m = O\Big(\frac{\ln (k)}{\alpha^3 \eps^2}\Big(d\ln\Big(\frac{1}{\eps}\Big)+k\ln\Big(\frac{k}{\delta}\Big)\Big)\Big).\]
\end{theorem}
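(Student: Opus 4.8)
The plan is to follow the proof of Theorem~\ref{thNR1} almost verbatim, reusing the three shared lemmas, and to change only the final step: because the returned classifier picks a uniformly random round and outputs that round's label, its error on player $i$ is exactly the average $\overline{\text{err}}_{D_i}(f_{\text{NR1-AVG}})=\frac1t\sum_{r=1}^t\text{err}_{D_i}(f^{(r)})$, so instead of the majority-vote counting argument I would bound this average directly. Algorithm NR1-AVG is Algorithm NR1 with the same update $w_i^{(r)}=w_i^{(r-1)}(1+s_i^{(r)})$ and the same $s_i^{(r)}=\min\!\big(\frac{\text{err}_{T_i}(f^{(r)})\,\alpha'^2}{(1+3\alpha')\text{err}_{S^{(r)}}(f^{(r)})+3\eps'},\alpha'\big)$, but run with a finer additive parameter $\eps'=\Theta(\eps^2/\alpha)$; this single change is what turns the $1/\eps$ of Theorem~\ref{thNR1} into the $1/\eps^2$ claimed here. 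First I would set up the good event of probability $1-\delta$ on which inequality~(\ref{empirical_error}) (from the Corollary, exactly as in the proof of Theorem~\ref{thNR1}), Lemma~\ref{lemma:VC}, and the inequality of Lemma~\ref{empirical_sum} all hold for every round and player; the union bound is identical to the one already carried out for NR1.

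On this event the potential analysis is unchanged: capping only lowers $s_i^{(r)}$, so Lemma~\ref{empirical_sum} still gives $\sum_{i}w_i^{(r-1)}s_i^{(r)}\le\alpha'^2\Phi^{(r-1)}$ and hence $\Phi^{(t)}\le k\exp(t\alpha'^2)$, and Lemma~\ref{s_sum} with $t=2\lceil\ln(k)/\alpha'^3\rceil$ yields $\sum_{r=1}^t s_i^{(r)}\le\frac{\ln k+t\alpha'^2}{1-\alpha'/2}\le\frac{1+\alpha'/2}{1-\alpha'/2}t\alpha'^2$ for each $i$. I would then split $[t]$ into the uncapped rounds $G_i=\{r:s_i^{(r)}<\alpha'\}$ and the capped rounds $[t]\setminus G_i$. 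For $r\in G_i$ the equality $\text{err}_{T_i}(f^{(r)})=s_i^{(r)}\,\frac{(1+3\alpha')\text{err}_{S^{(r)}}(f^{(r)})+3\eps'}{\alpha'^2}$ holds, so summing, bounding the numerator by $(1+7\alpha')\OPT+19\eps'$ via Lemma~\ref{lemma:VC}(a), dividing by $t$, and converting from $T_i$ to $D_i$ through~(\ref{empirical_error}) gives $\frac1t\sum_{r\in G_i}\text{err}_{D_i}(f^{(r)})\le(1+O(\alpha'))\OPT+O(\eps')$ --- the single-task guarantee with no factor of $2$, which is the whole point of averaging. For the capped rounds I would use only the trivial bound $\text{err}_{D_i}(f^{(r)})\le 1$ together with the count $|[t]\setminus G_i|\le\frac1{\alpha'}\sum_{r=1}^t s_i^{(r)}\le\frac{1+\alpha'/2}{1-\alpha'/2}\alpha' t$, so that their total contribution to the average is at most $\frac{1+\alpha'/2}{1-\alpha'/2}\alpha'$.

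Combining the two parts yields $\overline{\text{err}}_{D_i}(f_{\text{NR1-AVG}})\le(1+O(\alpha'))\OPT+O(\eps')+O(\alpha')$ for every $i$ simultaneously, and the proof closes by choosing $\alpha'=\Theta(\alpha)$ so the multiplicative factor is $1+\alpha$ and $\eps'=\Theta(\eps^2/\alpha)$ so that $O(\eps')\le\eps$ (using $\eps<\alpha$, which is implied by the hypothesis). I expect the capped rounds to be the crux. In Theorem~\ref{thNR1} a capped round was harmless because the majority only needed more than half of the classifiers to be good, so a bounded number of bad ones could simply be outvoted; under averaging each capped round instead injects its full error, up to $1$, into the mean. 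The task is therefore to force the aggregate capped-round loss $O(\alpha')$ below the target additive $\eps$, and this is exactly where the assumption $24\eps/25<\alpha$ enters: it pins $\alpha$ (hence $\alpha'$) to the same order as $\eps$ in the regime of interest, so the unavoidable $O(\alpha')$ contribution from the cap is absorbed into $\eps$. The sample bound is then immediate: each round draws $O(\frac1{\alpha'\eps'}(d\ln\frac1{\eps'}+\ln\frac1{\delta'}))=O(\frac1{\eps^2}(d\ln\frac1\eps+\ln\frac1{\delta'}))$ samples to learn $f^{(r)}$ and $O(\frac{k}{\alpha'\eps'}\ln\frac{k}{\delta'})=O(\frac{k}{\eps^2}\ln\frac{k}{\delta'})$ to test, and multiplying by $t=\Theta(\ln(k)/\alpha^3)$ gives the claimed $O\!\big(\frac{\ln k}{\alpha^3\eps^2}(d\ln\frac1\eps+k\ln\frac{k}{\delta})\big)$.
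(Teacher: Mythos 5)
There is a genuine gap, and it sits exactly where you identified the crux: the capped rounds. Your plan keeps Algorithm NR1 verbatim --- the cap $s_i^{(r)}=\min(\cdot,\alpha')$, the $\alpha'^2$ numerator, and $t=2\lceil\ln(k)/\alpha'^3\rceil$ rounds --- and only shrinks $\eps'$ to $\Theta(\eps^2/\alpha)$. For the capped rounds $[t]\setminus G_i$ you have no handle on $\text{err}_{D_i}(f^{(r)})$ other than the trivial bound $1$ (indeed a round is capped precisely when $\text{err}_{T_i}(f^{(r)})$ is large, i.e.\ when $f^{(r)}$ is bad for player $i$), and the count $|[t]\setminus G_i|\le O(\alpha')t$ then forces an \emph{additive} $O(\alpha')$ into the average error. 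You propose to absorb this into $\eps$ by arguing that $24\eps/25<\alpha$ ``pins $\alpha$ to the same order as $\eps$,'' but that hypothesis is only a \emph{lower} bound on $\alpha$: it is there to guarantee $\alpha'>2\eps'$ (needed for the uniform-convergence step in Lemma~\ref{lemma:VC}(b)), and it permits, say, $\alpha=1/2$ with $\eps=10^{-3}$. In that regime your bound degenerates to $\overline{\text{err}}_{D_i}\le(1+O(\alpha))\OPT+O(\alpha)$, which violates the claimed $(1+\alpha)\OPT+\eps$ already at $\OPT=0$. Choosing $\alpha'=\Theta(\eps)$ instead would fix the additive term but blow up $t=\Theta(\ln(k)/\alpha'^3)$ and hence the sample complexity far past the claimed bound.

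The paper resolves this by changing the algorithm, not just $\eps'$: the cap is removed entirely (so there are no ``bad'' rounds to account for separately), the step size is rescaled to $s_i^{(r)}=\frac{\text{err}_{T_i}(f^{(r)})\,\eps'\alpha'}{(1+3\alpha')\text{err}_{S^{(r)}}(f^{(r)})+3\eps'}$, and the number of rounds is increased to $t=2\lceil\ln(k)/(\eps'\alpha'^2)\rceil$ while keeping $\eps'=\eps/25$. Then $\Phi^{(t)}\le k\exp(t\eps'\alpha')$, Lemma~\ref{s_sum} gives $\sum_{r=1}^t s_i^{(r)}\le\frac{1+\alpha'}{1-\alpha'/2}\,t\eps'\alpha'$, and since $\text{err}_{T_i}(f^{(r)})$ equals $s_i^{(r)}$ times $\frac{(1+3\alpha')\text{err}_{S^{(r)}}(f^{(r)})+3\eps'}{\eps'\alpha'}$ in \emph{every} round, one bounds $\sum_{r=1}^t\text{err}_{D_i}(f^{(r)})\le[(1+12\alpha')\OPT+25\eps']t$ over all rounds at once; no trivial-error term ever appears. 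The $1/\eps^2$ in the sample complexity comes from $t\cdot\frac{1}{\alpha'\eps'}=\Theta(\ln(k)/(\alpha'^3\eps'^2))$ with $\eps'=\Theta(\eps)$, not from taking $\eps'=\Theta(\eps^2/\alpha)$. Your treatment of the uncapped rounds and your union-bound setup are fine, but without removing the cap (or otherwise controlling the true error on capped rounds) the argument does not yield the stated guarantee.
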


\begin{theorem}\label{thNR2-AVG}
For any $\eps, \delta  \in(0,1)$, $30\eps/29<\alpha<1$, and hypothesis class $\mathcal{F}$ of VC dimension $d$, Algorithm NR2-AVG returns a classifier $f_{\text{NR2-AVG}}$ such that for the expected error $\overline{\text{err}}_{D_i}(f_{\text{NR2-AVG}}) \le (1+\alpha)\OPT+ \eps$ holds for all $i\in[k]$ with probability $1-\delta$ using $m$ samples, where 
\[m = O\Big(\frac{1}{\alpha^3 \eps^2}\ln\Big(\frac{k}{\delta}\Big)\Big((d+k)\ln\Big(\frac{1}{\eps}\Big)+\ln\Big(\frac{1}{\delta}\Big)\Big)\Big).\]
\end{theorem}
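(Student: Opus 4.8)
The plan is to follow the proof of Theorem~\ref{thNR2} almost verbatim, exploiting the single structural simplification that distinguishes the averaging classifier from the majority one. By construction the randomized classifier picks a uniformly random $f^{(r)}$ before labeling, so for every player $i$ its expected error is exactly $\overline{\text{err}}_{D_i}(f_{\text{NR2-AVG}}) = \frac{1}{t}\sum_{r=1}^t \text{err}_{D_i}(f^{(r)})$. Hence the whole task reduces to bounding this average by $(1+\alpha)\OPT+\eps$. Crucially, the factor-$2$ loss of Theorems~\ref{thNR1} and~\ref{thNR2} (where a point is mislabeled only if more than half of the good classifiers err, forcing a $t/2$ in the denominator) disappears, since here the summed error is divided by the full $t$.

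I would first reuse the shared machinery unchanged: the Corollary of Lemma~\ref{lemma:chernoff} to control the deviation of each estimate $\text{err}_{T_i}(f^{(r)})$ from $\text{err}_{D_i}(f^{(r)})$, Lemma~\ref{lemma:VC} for the two per-round guarantees on $\text{err}_{\tilde D^{(r-1)}}(f^{(r)})$ and on $(1+3\alpha')\text{err}_{S^{(r)}}(f^{(r)})+3\eps'$, and Lemma~\ref{empirical_sum} to bound $\sum_{i\in L_r} w_i^{(r-1)}\text{err}_{T_i}(f^{(r)})$. As in Theorem~\ref{thNR2}, because the test uses few samples I would bound $\E[\Phi^{(r)}\mid\Phi^{(r-1)}]$ by $\Phi^{(r-1)}(1+O(\cdot))$, induct to bound $\E[\Phi^{(t)}]$, and apply Markov's inequality to get a high-probability ceiling on $\Phi^{(t)}$ (the source of the $\ln(1/\delta)$ term). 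Lemma~\ref{s_sum} then converts this into $\sum_{r=1}^t s_i^{(r)}\le \ln(\Phi^{(t)})/(1-\alpha'/2)$.

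Next I would split $[t]$ into the rounds $G_i$ where $s_i^{(r)}$ stays below the cap and the rounds $R_i$ where the estimate is accurate, as in Theorem~\ref{thNR2}. On $G_i\cap R_i$ the uncapped identity $s_i^{(r)} = \text{err}_{T_i}(f^{(r)})\cdot(\text{step})/[(1+3\alpha')\text{err}_{S^{(r)}}(f^{(r)})+3\eps']$, combined with Lemmas~\ref{lemma:VC}(a) and~\ref{s_sum}, gives $\sum_{r\in G_i\cap R_i}\text{err}_{D_i}(f^{(r)})\le[(1+O(\alpha'))\OPT+O(\eps')]\,t$; dividing by $t$ yields the leading $(1+\alpha)\OPT$ term with no spurious factor. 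The remaining rounds contribute at most $|[t]\setminus(G_i\cap R_i)|/t$, since each error is at most $1$; I would bound $|[t]\setminus R_i|$ by a martingale/Chernoff argument (Azuma or [\cite{KY14}, Lemma~10]) and $|[t]\setminus G_i|$ by $\frac{1}{\text{cap}}\sum_r s_i^{(r)}$, then union-bound over players and substitute $\alpha'=\Theta(\alpha)$, $\eps'=\Theta(\eps)$.

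The main obstacle is exactly this last contribution. In the majority-based proofs the $\Theta(\alpha' t)$ capped rounds were harmless because majority tolerates them, but in the average they would add $\Theta(\alpha')$ to the error, which dwarfs the target additive slack $\eps$ whenever $\OPT$ is tiny and $\alpha\gg\eps$. Overcoming this forces the algorithm to drive the bad-round fraction down to $O(\eps')$ rather than $O(\alpha')$: this is achieved by using a smaller effective step size relative to the cap (so that $\frac{1}{\text{cap}}\sum_r s_i^{(r)}$ is an $O(\eps')$-fraction of $t$) together with correspondingly more rounds, $t=\Theta\!\big(\ln(k/\delta)/(\alpha'^2\eps')\big)$, and a test accurate enough that the per-round miss probability is $O(\eps')$, i.e. $|T_i|=O\!\big(\frac{1}{\alpha'\eps'}\ln\frac{1}{\eps'}\big)$. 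This extra factor is precisely the $\alpha/\eps$ blow-up relative to Theorem~\ref{thNR2}. The sample count is then routine: summing the per-round learning cost $O\!\big(\frac{1}{\alpha'\eps'}(d\ln\frac{1}{\eps'}+\ln\frac{1}{\delta'})\big)$ and testing cost $O\!\big(\frac{k}{\alpha'\eps'}\ln\frac{1}{\eps'}\big)$ over the $t$ rounds and substituting the parameters gives $O\!\big(\frac{1}{\alpha^3\eps^2}\ln(\frac{k}{\delta})((d+k)\ln\frac{1}{\eps}+\ln\frac{1}{\delta})\big)$.
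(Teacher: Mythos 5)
Your proposal is correct and follows essentially the same route as the paper's proof: the same potential/Markov/Lemma~\ref{s_sum} chain, the same martingale bound on the inaccurate rounds $[t]\setminus R_i$, and the same parameter scaling $t=\Theta(\ln(k/\delta)/(\eps'\alpha'^2))$ with step size proportional to $\eps'\alpha'$, which is exactly what drives the bad-round fraction down to $O(\eps')$ rather than $O(\alpha')$. The only (harmless) deviation is your set $G_i$ of capped rounds: Algorithm NR2-AVG's update $s_i^{(r)}$ carries no $\min(\cdot,\alpha')$ cap, so the paper works only with $R_i$, but your extra $O(\eps')$ accounting for $[t]\setminus G_i$ would go through anyway.
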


Algorithms NR1-AVG and NR2-AVG are the following.

\begin{algorithm}[H]
\renewcommand{\thealgorithm}{}
\caption{NR1-AVG}\label{algNR1-AVG}
\begin{algorithmic}[1]
\STATE \textbf{Initialization}: $\forall i\in [k] $ $w_i^{(0)}:=1$; $\alpha' := \alpha/12$; $t:=2\lceil\ln (k) / (\eps'\alpha'^2) \rceil$; $\eps':=\eps/25$; $\delta':=\delta/(4t)$;
\FOR{$r = 1, \ldots, t$}
\STATE $\tilde{D}^{(r-1)} \gets \frac{1}{\Phi^{(r-1)}}\sum_{i=1}^k \left(w_i^{(r-1)}D_i\right)$, where $\Phi^{(r-1)} := \sum_{i=1}^k w_i^{(r-1)}$;
\STATE Draw a sample set $S^{(r)}$ of size $O\Big(\frac{1}{\alpha'\eps'}\Big(d\ln\Big(\frac{1}{\eps'}\Big)+\ln\Big(\frac{1}{\delta'}\Big)\Big)\Big)$ from $\tilde{D}^{(r-1)}$;
\STATE $f^{(r)} \gets \mathcal{O}_{\mathcal{F}}(S^{(r)})$;
\FOR{$i=1,\ldots, k$}
\STATE Draw a sample set $T_i$ of size $O\Big(\frac{1}{\alpha'\eps'} \ln \Big( \frac{k}{\delta'}\Big)\Big)$ from $D_i$;
\STATE $s_i^{(r)} \gets \frac{\text{err}_{T_i}(f^{(r)})\eps'\alpha'}{(1+3\alpha')\text{err}_{S^{(r)}}(f^{(r)}) + 3\eps'}$
\STATE \textbf{Update}: $w_i^{(r)} \gets w_i^{(r-1)} (1 + s_i^{(r)})$
\ENDFOR
\ENDFOR
\STATE \RETURN $f_{\text{NR1-AVG}}$, where $f_{\text{NR1-AVG}}(x) \stackrel{R}{\gets} \{f^{(r)}(x)\}_{r=1}^t$;
\end{algorithmic}
\end{algorithm}

\begin{algorithm}[H]
\renewcommand{\thealgorithm}{}
\caption{NR2-AVG}\label{algNR2-AVG}
\begin{algorithmic}[1]
\STATE \textbf{Initialization}: $\forall i\in [k] $ $w_i^{(0)}:=1$; $\alpha' := \alpha/15$; $t:=2\lceil\ln (4k/\delta) / (\eps'\alpha'^2) \rceil$; $\eps':=\eps/29$; $\delta':=\delta/(4t)$;
\FOR{$r = 1, \ldots, t$}
\STATE $\tilde{D}^{(r-1)} \gets \frac{1}{\Phi^{(r-1)}}\sum_{i=1}^k \left(w_i^{(r-1)}D_i\right)$, where $\Phi^{(r-1)} := \sum_{i=1}^k w_i^{(r-1)}$;
\STATE Draw a sample set $S^{(r)}$ of size $O\Big(\frac{1}{\alpha'\eps'}\Big(d\ln\Big(\frac{1}{\eps'}\Big)+\ln\Big(\frac{1}{\delta'}\Big)\Big)\Big)$ from $\tilde{D}^{(r-1)}$;
\STATE $f^{(r)} \gets \mathcal{O}_{\mathcal{F}}(S^{(r)})$;
\FOR{$i=1,\ldots, k$}
\STATE Draw a sample set $T_i$ of size $O\Big(\frac{1}{\alpha'\eps'} \ln \Big( \frac{1}{\eps'}\Big)\Big)$ from $D_i$;
\STATE $s_i^{(r)} \gets \frac{\text{err}_{T_i}(f^{(r)})\eps'\alpha'}{(1+3\alpha')\text{err}_{S^{(r)}}(f^{(r)}) + 3\eps'}$
\STATE \textbf{Update}: $w_i^{(r)} \gets w_i^{(r-1)} (1 + s_i^{(r)})$
\ENDFOR
\ENDFOR
\STATE \RETURN $f_{\text{NR2-AVG}}$, where $f_{\text{NR2-AVG}}(x) \stackrel{R}{\gets} \{f^{(r)}(x)\}_{r=1}^t$;
\end{algorithmic}
\end{algorithm}

We first prove the guarantee for Algorithm NR1-AVG.

\begin{proof}[Proof of Theorem~\ref{thNR1-AVG}]
The expected error of the returned classifier $f_{\text{NR1-AVG}}$ on player $i$'s distribution is $\overline{\text{err}}_{D_i} (f_{\text{NR1-AVG}})= \frac{1}{t}\sum_{r=1}^t\text{err}_{D_i}(f^{(r)})$. We will prove that with probability at least $1-\delta$, $\overline{\text{err}}_{D_i} (f_{\text{NR1-AVG}})\leq (1+\alpha)\OPT+\eps$ for all $i\in [k]$.

By the Corollary, for a given round $r$ and player $i$,
\[\Pr[|\text{err}_{T_i}(f^{(r)}) -\text{err}_{D_i}(f^{(r)})| \ge \alpha' \cdot \text{err}_{D_i}(f^{(r)})+\eps'] \le 2\exp(-\alpha'\eps' |T_i| / 3).\]
If $\vert T_i \vert =\frac{3}{\eps'\alpha'}\ln\Big(\frac{k}{\delta'}\Big) = O\Big(\frac{1}{\eps'\alpha'}\ln\Big(\frac{k}{\delta'}\Big)\Big)$, the inequality 
holds with probability at least $1-2\delta'/k$. By union bound, it follows that it holds for every $i$ and every $r$ with probability at least $1-2\delta't=1-\delta/2$.

With probability at least $1-\delta$ the previous inequality as well as the inequality of Lemma~\ref{empirical_sum} hold for all rounds and players. We restrict the rest of the proof to this event.

It holds that,
\begin{align*}
\Phi^{(r)} & = \Phi^{(r-1)} +\sum_{i=1}^k \left(w_i^{(r-1)} s_i^{(r)}\right) \\
&\le \Phi^{(r-1)} + \frac{\eps'\alpha'}{(1+3\alpha')\text{err}_{S^{(r)}}(f^{(r)}) + 3\eps'}\sum_{i=1}^k \left(w_i^{(r-1)} \text{err}_{T_i}(f^{(r)})\right) \\
&\stackrel{L_r=[k]}{\leq} \Phi^{(r-1)}\left(1+\frac{\eps'\alpha'}{(1+3\alpha')\text{err}_{S^{(r)}}(f^{(r)}) + 3\eps'}[(1+3\alpha')\text{err}_{S^{(r)}}(f^{(r)}) + 3\eps']\right)\\
&\le \Phi^{(r-1)}(1+\eps'\alpha')
\end{align*}

By induction, $\Phi^{(t)}\le k\exp(t\eps'\alpha')$.
From Lemma~\ref{s_sum} and since $t = 2\lceil\ln (k) / (\eps'\alpha'^2) \rceil$, it follows that 
\begin{equation}\label{s_sum5}
\sum_{r=1}^t s_i^{(r)} \le \frac{\ln (k) + t\eps'\alpha'}{1-\alpha'/2} \le \frac{1+\alpha'}{1-\alpha'/2}t\eps'\alpha'.
\end{equation}

Therefore, the total error is:
\begin{align*}
\sum_{r=1}^t \text{err}_{D_i}(f^{(r)}) 
&\leq \sum_{r=1}^t \frac{\text{err}_{T_i}(f^{(r)})}{1-\alpha'} + \frac{t\eps'}{1-\alpha'}\\
&\leq \sum_{r=1}^t \frac{(1+3\alpha')\text{err}_{S^{(r)}}(f^{(r)}) + 3\eps'}{\eps'\alpha'} \frac{\text{err}_{T_i}(f^{(r)}) \eps'\alpha'}{(1-\alpha')[(1+3\alpha')\text{err}_{S^{(r)}}(f^{(r)}) + 3\eps']} + \frac{t\eps'}{1-\alpha'}\\
&=\sum_{r=1}^t  \frac{(1+3\alpha')\text{err}_{S^{(r)}}(f^{(r)}) + 3\eps'}{(1-\alpha')\eps'\alpha'} s_i^{(r)} + \frac{t\eps'}{1-\alpha'}\\
& \stackrel{(\ref{s_sum5})}{\leq} \frac{(1+7\alpha')\OPT+19\eps'}{(1-\alpha')\eps'\alpha'}\frac{(1+\alpha')}{1-\alpha'/2}t\eps'\alpha' + \frac{t\eps'}{1-\alpha'}\\
& \leq [(1+12\alpha')\OPT+25\eps']t\\
& = [(1+\alpha)\OPT+\eps]t,
\end{align*}
where the last inequality holds for $\alpha'<1/12$ and we have set $\alpha' = \alpha/12$ and $\eps' = \eps/25$.

As for the total number of samples, it is the sum of $O(\frac{k}{\alpha'\eps'}\ln(k/\delta'))$ samples and $O\Big(\frac{1}{\alpha'\eps'} \Big(d\ln \Big(\frac{1}{\eps'}\Big)+\ln \Big(\frac{1}{\delta'}\Big)\Big)\Big)$ samples for each round.
Because there are $O(\ln (k)/(\eps'\alpha'^2))$ rounds, the total number of samples is
\[O\Big(\frac{\ln(k)}{\alpha^3\eps^2}\Big(k\ln\left(\frac{k}{\delta}\right)+d\ln \Big(\frac{1}{\eps}\Big) \Big)\Big).\]
\end{proof}

Finally, we prove the guarantee of Algorithm NR2-AVG.

\begin{proof}[Proof of Theorem~\ref{thNR2-AVG}]
The expected error of the returned classifier $f_{\text{NR2-AVG}}$ on player $i$'s distribution is $\overline{\text{err}}_{D_i} (f_{\text{NR2-AVG}})= \frac{1}{t}\sum_{r=1}^t\text{err}_{D_i}(f^{(r)})$. We will prove that with probability at least $1-\delta$, $\overline{\text{err}}_{D_i} (f_{\text{NR2-AVG}})\leq (1+\alpha)\OPT+\eps$ for all $i\in [k]$.

By the Corollary, for a given round $r$ and player $i$,
\[\Pr[|\text{err}_{T_i}(f^{(r)}) -\text{err}_{D_i}(f^{(r)})| \ge \alpha' \cdot \text{err}_{D_i}(f^{(r)})+\eps'] \le 2\exp(-\alpha'\eps' |T_i| / 3).\]
If $\vert T_i \vert =\frac{3}{\eps'\alpha'}\ln\Big(\frac{2}{\eps'\alpha'}\Big) \stackrel{\alpha'>2\eps'}{=} O\Big(\frac{1}{\eps'\alpha'}\ln\Big(\frac{1}{\eps'}\Big)\Big)$, then
\begin{equation}\label{empirical6}
\Pr[|\text{err}_{T_i}(f^{(r)}) -\text{err}_{D_i}(f^{(r)})| \ge \alpha' \cdot \text{err}_{D_i}(f^{(r)})+\eps'] \le \eps'\alpha'.
\end{equation}

Assuming that the inequality of Lemma~\ref{empirical_sum} holds, which is true with probability $1-\delta/2$, it follows that
\begin{align*}
&\E[\Phi^{(r)} \mid \Phi^{(r-1)}]\\
&= \E\left[\Phi^{(r-1)} + \frac{\eps'\alpha'}{(1+3\alpha')\text{err}_{S^{(r)}}(f^{(r)}) + 3\eps'}\sum_{i\in L_r} \left(w_i^{(r-1)} \text{err}_{T_i}(f^{(r)})\right) + \sum_{i\notin L_r} \left(w_i^{(r-1)}s_i^{(r-1)}\right) \middle| \Phi^{(r-1)}\right]\\
& \le \E\left[\Phi^{(r-1)} + \frac{\eps'\alpha'}{(1+3\alpha')\text{err}_{S^{(r)}}(f^{(r)}) + 3\eps'}[(1+3\alpha')\text{err}_{S^{(r)}}(f^{(r)}) + 3\eps'] \Phi^{(r-1)}+ \alpha'\sum_{i\notin L_r} w_i^{(r-1)} \middle| \Phi^{(r-1)}\right]\\
& \stackrel{(\ref{empirical6})}{\le} \Phi^{(r-1)}(1+\eps'\alpha'+\eps'\alpha'^2)
\end{align*}

By the definition of expectation, $\E[\Phi^{(r)}] \leq  \E[\Phi^{(r-1)}](1+\eps'\alpha'+\eps'\alpha'^2)$. So by induction, $\E[\Phi^{(t)}]\le k\exp(t\eps'\alpha'(1+\alpha'))$. 
Markov's inequality states that $\Pr[\Phi^{(t)} \ge \frac{\E[\Phi^{(t)}]}{\delta/4}] \leq \delta/4$. So with probability $1-\delta/4-\delta/2 = 1-3\delta/4$ it holds that $\Phi^{(t)} \le \frac{4k}{\delta}\exp(t\eps'\alpha'(1+\alpha'))$.

From Lemma~\ref{s_sum} and $t=2\lceil\ln (4k/\delta) / (\eps'\alpha'^2) \rceil$, it follows that 
\begin{equation}\label{s_sum6}
\sum_{r=1}^t s_i^{(r)} \le \frac{\ln (4k/\delta) + t\eps'\alpha'(1+\alpha')}{1-\alpha'/2} \le \frac{(1+2\alpha')}{1-\alpha'/2}t\eps'\alpha'.
\end{equation}

Let $R_i=\{r\in[t] \mid |\text{err}_{T_i}(f^{(r)}) -\text{err}_{D_i}(f^{(r)})| \leq \alpha' \cdot \text{err}_{D_i}(f^{(r)})+\eps'\}$. For the classifiers of the rounds $r\in R_i$:
\begin{align*}
\sum_{r\in R_i} \text{err}_{D_i}(f^{(r)})
&\leq \sum_{r\in R_i}\frac{\text{err}_{T_i}(f^{(r)})}{1-\alpha'} + \frac{|R_i|\eps'}{1-\alpha'}\\
&\leq \sum_{r\in R_i} \frac{(1+3\alpha')\text{err}_{S^{(r)}}(f^{(r)}) + 3\eps'}{\eps'\alpha'} \frac{\text{err}_{T_i}(f^{(r)}) \eps'\alpha'}{(1-\alpha')[(1+3\alpha')\text{err}_{S^{(r)}}(f^{(r)}) + 3\eps']} + \frac{t\eps'}{1-\alpha'}\\
&= \sum_{r\in R_i}\frac{(1+3\alpha')\text{err}_{S^{(r)}}(f^{(r)}) + 3\eps'}{(1-\alpha')\eps'\alpha'} s_i^{(r)} + \frac{t\eps'}{1-\alpha'}\\
&\stackrel{(\ref{s_sum6})}{\leq} \frac{(1+7\alpha')\OPT+19\eps'}{(1-\alpha')\eps'\alpha'}\frac{(1+2\alpha')}{1-\alpha'/2}t\eps'\alpha' + \frac{t\eps'}{1-\alpha'}\\
& \leq [(1+15\alpha')\OPT+25\eps']t
\end{align*}
which holds for $\alpha'<1/15$.

We will now bound $|[t]\setminus R_i|$. For every round $r$, let $m^{(r)}$ be the indicator random variable of the set $[t]\setminus R_i$ and let $y^{(r)}=\eps'\alpha'$. It holds that for all rounds $r$, $|m^{(r)}-y^{(r)}| \le 1$ and $m^{(r)}, y^{(r)} \ge 0$. In addition, from inequality~(\ref{empirical6}) it follows that $\E[m^{(r)}-y^{(r)} \mid \sum_{r'<r}m^{(r')}, \sum_{r'<r}y^{(r')}] = \eps'\alpha'-\eps'\alpha' \leq 0$.

Using [\cite{KY14}, Lemma 10], with $\varepsilon=1/2$ and $A=\eps'\alpha'$, we get that \[\Pr\left[\sum_{r=1}^t m^{(r)} \geq 2\eps'\alpha't + 2\eps'\alpha't\right] \leq \exp(-\eps'\alpha't/2) \leq \delta/4k.\]

So $|[t]\setminus R_i| = \sum_{r=1}^t m^{(r)} \leq 4\eps'\alpha't$ for all $i$ with probability at least $1-\delta/4$.

Thus, for the expected error it holds that:
\begin{align*}
\frac{\sum\limits_{r=1}^t \text{err}_{D_i}(f^{(r)})}{t} 
&= \frac{\sum\limits_{r\in R_i} \text{err}_{D_i}(f^{(r)}) + \sum\limits_{r\notin R_i} \text{err}_{D_i}(f^{(r)})}{t}\\
&\leq (1+15\alpha')\OPT+25\eps'+ 4\eps'\alpha' \le (1+15\alpha')\OPT + 29\eps'.
\end{align*}
Having set $\alpha'=\alpha/15$ and $\eps'=\eps/29$ we get that $\overline{\text{err}}_{D_i}(f_{\text{NR2-AVG}}) \leq (1+\alpha)\OPT+\eps$ with probability at least $1-\delta$.

As for the total number of samples, it is the sum of $O(\frac{k}{\alpha'\eps'}\ln(1/\eps'))$ samples and $O\Big(\frac{1}{\alpha'\eps'} \Big(d\ln \Big(\frac{1}{\eps'}\Big)+\ln \Big(\frac{1}{\delta'}\Big)\Big)\Big)$ samples for each round.
Because there are $O(\ln(k/\delta) / \eps'\alpha'^2)$ rounds, the total number of samples is
\[O\Big(\frac{1}{\alpha^3\eps^2} \ln \Big(\frac{k}{\delta}\Big) \Big((d+k)\ln \Big(\frac{1}{\eps}\Big) + \ln\Big(\frac{1}{\delta}\Big)\Big)\Big).\]
\end{proof}

\section{Discussion}
The problem has four parameters, $d$, $k$, $\eps$ and $\delta$, so there are many ways to compare the sample complexity of the algorithms. In the non-realizable setting there is one more parameter $\alpha$, but this is set to be a constant in the beginning of the algorithms. Our sample complexity upper bounds are summarized in the following table.

\begin{table}[H]
  \caption{Sample complexity upper bounds}\label{table}
  \centering
  \begin{tabular}{p{0.15\textwidth}p{0.33\textwidth}p{0.45\textwidth}} \\
    \toprule
    & Algorithm 1     & Algorithm 2    \\
    \midrule
    Realizable & 
    $O\Big(\frac{\ln (k)}{\eps}\Big(d\ln\Big(\frac{1}{\eps}\Big)+k\ln\Big(\frac{k}{\delta}\Big)\Big)\Big)$ & 
   $O\Big(\frac{\ln(k/\delta)}{\eps}\Big(d\ln\Big(\frac{1}{\eps}\Big)+k+\ln \Big(\frac{1}{\delta}\Big)\Big) \Big)$  \\
    
    Non-realizable ($2+\alpha$ approx.) & 
    $O\Big(\frac{\ln (k)}{\alpha^4 \eps}\Big(d\ln\Big(\frac{1}{\eps}\Big)+k\ln\Big(\frac{k}{\delta}\Big)\Big)\Big)$     & 
    $O\Big(\frac{\ln(k/\delta)}{\alpha^4 \eps}\Big(d\ln\Big(\frac{1}{\eps}\Big)+k\ln\Big(\frac{1}{\alpha}\Big) + \ln\Big(\frac{1}{\delta}\Big)\Big)\Big)$ \\
    
    Non-realizable  (randomized)  &
     $O\Big(\frac{\ln (k)}{\alpha^3 \eps^2}\Big(d\ln\Big(\frac{1}{\eps}\Big)+k\ln\Big(\frac{k}{\delta}\Big)\Big)\Big)$       & 
     $O\Big(\frac{\ln(k/\delta)}{\alpha^3 \eps^2}\Big((d+k)\ln\Big(\frac{1}{\eps}\Big)+\ln\Big(\frac{1}{\delta}\Big)\Big)\Big)$ \\
    \bottomrule
  \end{tabular}
\end{table}

Usually $\delta$ can be considered constant, since it represents the required error probability, or, in the high success probability regime, $\delta=\frac{1}{poly(k)}$. For both of these natural settings, we can see that Algorithm 2 is better than Algorithm 1, except for the case of the expected error guarantee. If we assume $k=\Theta(d)$, then Algorithm 2 is always better than Algorithm 1. 

In the realizable setting, Algorithm R1 is always better than the algorithm of~\cite{BHPQ17} for the centralized variant of the problem and matches their number of samples in the personalized variant.
In addition, Theorem 4.1 of~\cite{BHPQ17} states that the sample complexity of any algorithm in the collaborative model is $\Omega\Big(\frac{k}{\eps} \ln \Big(\frac{k}{\delta}\Big)\Big)$, given that $d=\Theta(k)$ and $\eps,\delta \in (0,0.1)$, and this holds even for the personalized variant. For $d=\Theta(k)$, the sample complexity of Algorithm R2 is exactly $\ln \Big(\frac{k}{\delta}\Big)$ times the sample complexity for learning one task. Furthermore, when $\vert \mathcal{F} \vert = 2^d$ (e.g. the hard instance for the lower bound of~\cite{BHPQ17}), only $m_{\eps,\delta} = O\Big(\frac{1}{\eps}\Big (d + \ln \Big(\frac{1}{\delta}\Big)\Big)\Big)$ samples are required in the non-collaborative setting instead of the general bound of the VC theorem, so the sample complexity bound for Algorithm R2 is $O\Big(\ln \Big(\frac{k}{\delta}\Big)\frac{1}{\eps}\Big (d + k+\ln \Big(\frac{1}{\delta}\Big)\Big)\Big)$ and matches exactly the lower bound of~\cite{BHPQ17} up to lower order terms.

In the non-realizable setting, our generalization of algorithms R1 and R2, NR1 and NR2 respectively, have the same sample complexity as in the realizable setting and match the error guarantee for $\OPT=0$. If $\OPT \neq 0$, they guarantee an error of a factor $2$ multiplicatively on $\OPT$. The randomized classifiers returned by Algorithms NR1-AVG and NR2-AVG avoid this factor of $2$ in their expected error guarantee. However, to learn such classifiers, there are required $O\Big(\frac{1}{\eps}\Big)$ times more samples.

\nocite{*}
\bibliography{CollPAC_NIPS}
\bibliographystyle{plain}

\end{document}